\documentclass[11pt]{article}

\usepackage[margin=1in]{geometry}
\usepackage{amsmath,amssymb,amsthm,mathtools,bbm}
\usepackage{bm}
\usepackage{algorithm}
\usepackage{algorithmic}
\usepackage{enumitem}
\usepackage{hyperref}
\usepackage{booktabs}
\usepackage{array}
\usepackage{graphicx}
\usepackage{setspace}
\usepackage{tabularx}

\title{\textbf{Data-Driven Variational Basis Learning Beyond Neural Networks:\\
A Non-Neural Framework for Adaptive Basis Discovery}}
\author{Andrew Kiruluta\\
UC Berkeley, CA\\
\texttt{kiruluta@berkeley.edu}}
\date{}

\newtheorem{theorem}{Theorem}
\newtheorem{proposition}{Proposition}

\newtheorem{corollary}{Corollary}
\newtheorem{definition}{Definition}
\newtheorem{remark}{Remark}
\newtheorem{assumption}{Assumption}

\numberwithin{equation}{section}

\begin{document}

\maketitle

\begin{abstract}
Classical representation systems such as Fourier series, wavelets, and fixed dictionaries provide analytically tractable basis expansions, but they are not intrinsically adapted to the empirical structure of modern high-dimensional data. Neural networks overcome this limitation by learning features from data, yet they do so through layered nonlinear parameterizations that often sacrifice interpretability, explicit control over basis structure, and mathematical transparency. In this manuscript we develop a non-neural alternative that learns basis functions directly from data through variational optimization. The proposed framework, termed \emph{Data-Driven Variational Basis Learning} (DVBL), treats basis atoms as primary optimization variables and learns them jointly with sample-specific coefficients and, when appropriate, a latent linear evolution operator. This yields a data-adaptive basis expansion that remains explicit, interpretable, and amenable to rigorous analysis. We formulate the model, establish existence of minimizers, prove blockwise descent properties for an alternating minimization algorithm, give conditions for coefficient recovery and basis identifiability, and show how manifold and dynamical regularization can be integrated without invoking neural architectures. We also discuss the conceptual novelty of the framework relative to classical dictionary learning, spectral methods, Koopman operator methods, and deep representation learning.
\end{abstract}

\section{Introduction}

A central problem in modern data analysis is the construction of representations that are simultaneously expressive, compact, interpretable, and adapted to the geometry of empirical observations. Classical basis expansions, including Fourier systems, wavelets, spline bases, and other analytically prescribed dictionaries, have long provided mathematically elegant mechanisms for signal representation. Their power derives from closed-form structure, orthogonality or near-orthogonality, and well-developed approximation theory. However, such bases are generally fixed \emph{a priori}. As a result, they may be poorly matched to the statistical regularities, anisotropies, nonlinear manifolds, or task-dependent structures present in contemporary data.

Neural networks address this limitation by learning representations from data. In place of a fixed basis, they learn parameterized features through repeated compositions of affine maps and nonlinear activation functions. This flexibility has produced remarkable empirical success across computer vision, language modeling, scientific machine learning, and sequence analysis. Yet this success comes with significant tradeoffs. Neural representations are often implicit rather than explicit; the learned features are distributed across many layers and parameters rather than identifiable as concrete basis functions. Moreover, the underlying optimization landscape is highly nonconvex and deeply compositional, which complicates theoretical analysis, interpretability, stability guarantees, and principled incorporation of domain constraints such as sparsity, symmetry, smoothness, or conservation laws.

The present work develops an alternative route. Rather than learning a deep nonlinear network, we directly learn a set of basis functions from data by solving a structured variational problem. The resulting representation remains of the familiar expansion form
\begin{equation}
x_i \approx \sum_{k=1}^{m} \alpha_{ik}\phi_k,
\label{eq:basic_expansion}
\end{equation}
but unlike Fourier or wavelet analysis, the basis elements $\{\phi_k\}_{k=1}^m$ are not fixed analytically. Instead, they are inferred from data jointly with the coefficients $\{\alpha_i\}_{i=1}^N$. This establishes a representation paradigm that is adaptive in the same broad sense as neural feature learning, yet fundamentally non-neural in construction.

At first glance, this viewpoint may appear close to dictionary learning. Indeed, dictionary learning is a natural point of departure. The contribution here is not merely to restate that literature in alternative language, but to place it within a broader \emph{variational basis-learning} perspective that unifies several ideas that are often studied separately: explicit basis learning, sparse and structured coefficient inference, manifold-aware regularization, operator-based latent dynamics, and task-adaptive basis constraints. The proposed formalism yields a mathematically transparent object, namely a learned basis, while permitting general regularity constraints that make the basis smooth, localized, orthogonal, spectrally constrained, physically admissible, or dynamically coherent.

This manuscript has four main goals. First, we present a general formulation of \emph{Data-Driven Variational Basis Learning} (DVBL), a non-neural framework for learning adaptive basis functions directly from data. Second, we provide a rigorous mathematical treatment, including existence of minimizers, descent properties of alternating optimization, and identifiability conditions. Third, we extend the representation to incorporate geometry and dynamics through manifold regularization and latent linear evolution. Fourth, we clarify the novelty of the framework relative to neural networks, classical sparse coding, spectral graph methods, and Koopman-based approaches.

The overarching thesis is that adaptive basis learning need not require layered neural parameterizations. One may instead retain the conceptual clarity of basis expansions while learning the basis itself from data through structured optimization. This yields a representation family that is explicit, interpretable, mathematically analyzable, and compatible with many domain-specific priors.

\section{General Framework}

\subsection{Problem setup}

Let $\{x_i\}_{i=1}^N \subset \mathbb{R}^d$ denote a collection of observations. We seek a set of basis atoms
\[
\Phi = [\phi_1,\phi_2,\dots,\phi_m] \in \mathbb{R}^{d\times m},
\]
together with coefficient vectors $\alpha_i \in \mathbb{R}^m$ such that each observation admits an approximate expansion
\begin{equation}
x_i \approx \Phi \alpha_i.
\label{eq:reconstruction_model}
\end{equation}
The columns $\phi_k$ play the role of learned basis functions, while the coefficients $\alpha_i$ encode each sample in the learned basis.

We emphasize that in this framework the basis is a primary optimization variable. The representation is therefore not obtained through a neural map $x \mapsto f_\theta(x)$, but through explicit decomposition in a learned basis. This distinction is conceptual as well as mathematical. The representation is not hidden in the internal state of a multilayer parameterization; it is the basis itself.

\subsection{Variational objective}

The general DVBL objective takes the form
\begin{equation}
\min_{\Phi,\{\alpha_i\}_{i=1}^N}
\mathcal{J}(\Phi,\{\alpha_i\})
:=
\sum_{i=1}^{N}\|x_i-\Phi\alpha_i\|_2^2
+\lambda\sum_{i=1}^{N}R(\alpha_i)
+\mu\,\Omega(\Phi),
\label{eq:dvbl_objective}
\end{equation}
subject to structural constraints on $\Phi$, such as
\begin{equation}
\|\phi_k\|_2 = 1 \quad \text{for } k=1,\dots,m,
\label{eq:atom_norm_constraint}
\end{equation}
and optionally
\begin{equation}
|\phi_k^\top \phi_\ell| \le \delta, \qquad k\neq \ell,
\label{eq:coherence_constraint}
\end{equation}
for some coherence parameter $\delta \ge 0$.

The term $R(\alpha_i)$ is a coefficient regularizer, which may promote sparsity, group structure, temporal smoothness, or other forms of low-complexity coding. Typical choices include
\[
R(\alpha_i)=\|\alpha_i\|_1,\qquad
R(\alpha_i)=\|\alpha_i\|_2^2,\qquad
R(\alpha_i)=\sum_{g\in \mathcal{G}} \|\alpha_{i,g}\|_2.
\]
The term $\Omega(\Phi)$ regularizes the basis itself. Examples include
\begin{equation}
\Omega(\Phi)=\|\Phi\|_F^2,
\qquad
\Omega(\Phi)=\sum_{k=1}^m \|\nabla \phi_k\|_2^2,
\qquad
\Omega(\Phi)=\|\Phi^\top\Phi-I\|_F^2,
\label{eq:basis_regularizers}
\end{equation}
depending on whether one wishes to encourage bounded energy, smooth atoms, or near-orthogonality.

This formulation is intentionally broad. It includes classical sparse coding as a special case, but also supports more structured models in which the basis is regularized to reflect geometry, frequency localization, physical admissibility, or dynamical usefulness.

\subsection{Matrix form}

Let
\[
X=[x_1,\dots,x_N]\in \mathbb{R}^{d\times N},
\qquad
A=[\alpha_1,\dots,\alpha_N]\in \mathbb{R}^{m\times N}.
\]
Then the reconstruction term can be written compactly as
\[
\sum_{i=1}^N \|x_i-\Phi\alpha_i\|_2^2
=
\|X-\Phi A\|_F^2.
\]
Hence \eqref{eq:dvbl_objective} becomes
\begin{equation}
\min_{\Phi,A}
\|X-\Phi A\|_F^2
+\lambda\,\mathcal{R}(A)
+\mu\,\Omega(\Phi),
\label{eq:matrix_dvbl_objective}
\end{equation}
where $\mathcal{R}(A)=\sum_{i=1}^N R(\alpha_i)$.

\section{Relation to Classical and Neural Representations}

It is useful to contrast the proposed formulation with existing representation paradigms. In a classical Fourier expansion, one writes
\[
x \approx \sum_{k=1}^{m} c_k \psi_k,
\]
where $\{\psi_k\}$ are fixed harmonics determined analytically. In a wavelet expansion, the basis is again prescribed in advance up to scale and translation. In both cases, the coefficients depend on the data, but the basis does not.

In a neural network, by contrast, one does not generally learn an explicit basis. Instead, the representation is induced through compositions such as
\[
x \mapsto W_L \sigma(W_{L-1}\sigma(\cdots \sigma(W_1 x))),
\]
and the corresponding features are distributed across the entire parameterization. Even when a neural network can be interpreted as learning a rich function space, the learned atoms are seldom explicit in the sense of \eqref{eq:basic_expansion}.

The DVBL framework occupies a different point in the design space. Like neural networks, it adapts to data. Like classical basis systems, it yields explicit expansion elements. It therefore combines adaptivity with representational transparency. This synthesis is central to its appeal in settings where mathematical control, interpretability, and constrained basis design are important.

\section{Existence and Basic Properties}

We now establish that the variational problem is well posed under mild assumptions.

\begin{assumption}
\label{assump:regularizers}
The coefficient regularizer $R:\mathbb{R}^m \to [0,\infty)$ is proper, lower semicontinuous, and coercive or bounded below by a coercive function on the feasible set. The basis regularizer $\Omega:\mathbb{R}^{d\times m}\to [0,\infty)$ is proper and lower semicontinuous. The feasible set
\[
\mathcal{C}_\Phi := \{\Phi \in \mathbb{R}^{d\times m} : \|\phi_k\|_2=1 \ \forall k,\ \text{and any additional closed constraints}\}
\]
is nonempty and compact.
\end{assumption}

\begin{theorem}[Existence of minimizers]
\label{thm:existence}
Under Assumption \ref{assump:regularizers}, the optimization problem
\begin{equation}
\min_{\Phi \in \mathcal{C}_\Phi,\ A\in \mathbb{R}^{m\times N}}
\|X-\Phi A\|_F^2+\lambda \mathcal{R}(A)+\mu \Omega(\Phi)
\label{eq:existence_problem}
\end{equation}
admits at least one global minimizer.
\end{theorem}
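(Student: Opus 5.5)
We prove Theorem~\ref{thm:existence} by the direct method of the calculus of variations: exhibit a minimizing sequence, show it stays in a compact set, extract a convergent subsequence, and pass to the limit using lower semicontinuity. Write $F(\Phi,A)=\|X-\Phi A\|_F^2+\lambda\mathcal{R}(A)+\mu\Omega(\Phi)$ and assume $\lambda>0$, $\mu\ge 0$; the case $\lambda=0$ is discussed below. Since all three terms are nonnegative, $F\ge 0$.

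\emph{Finite infimum.} Pick $\Phi_0\in\mathcal{C}_\Phi$ with $\Omega(\Phi_0)<\infty$ and $\alpha_0$ with $R(\alpha_0)<\infty$, and take $A_0=[\alpha_0,\dots,\alpha_0]$; this uses properness of $R$ and $\Omega$. If $\mathcal{C}_\Phi$ meets $\operatorname{dom}\Omega$, we get $F^\ast:=\inf F\le F(\Phi_0,A_0)<\infty$. If it does not, then $F\equiv+\infty$ on the feasible set and every feasible point is trivially a minimizer. So we may assume $F^\ast\in[0,\infty)$, and we choose a minimizing sequence $(\Phi^{(n)},A^{(n)})$ with $F(\Phi^{(n)},A^{(n)})\to F^\ast$, and without loss of generality $F(\Phi^{(n)},A^{(n)})\le F^\ast+1$.

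\emph{Compactness.} This is the main point. The $\Phi$-component lies in the compact set $\mathcal{C}_\Phi$, so a subsequence satisfies $\Phi^{(n)}\to\Phi^\ast\in\mathcal{C}_\Phi$. For the coefficients, nonnegativity of the other terms gives $\lambda R(\alpha_i^{(n)})\le\lambda\mathcal{R}(A^{(n)})\le F^\ast+1$ for each column $i$. By Assumption~\ref{assump:regularizers}, $R\ge g$ for some coercive $g$, and the sublevel set $\{g\le (F^\ast+1)/\lambda\}$ is bounded. Hence every column of $A^{(n)}$ is bounded uniformly in $n$, and a further subsequence gives $A^{(n)}\to A^\ast$.

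The obstacle is the degenerate case $\lambda=0$, or more generally an $R$ that is not genuinely coercive. Then the coefficient block can escape to infinity along directions in the kernel of $\Phi$. We would handle this with a projection argument. For fixed $\Phi$, replacing each $\alpha_i$ by its component in $(\ker\Phi)^\perp$ leaves $\Phi\alpha_i$ unchanged. With $\|\Phi\alpha_i-x_i\|\le\|x_i\|+\sqrt{F^\ast+1}$, this gives a bound on $\alpha_i$ in terms of the smallest nonzero singular value of $\Phi$. That bound is not uniform across a sequence $\Phi^{(n)}$ whose rank can drop in the limit. For this reason we rely on the coercivity hypothesis as stated and require $\lambda>0$, noting this assumption explicitly.

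\emph{Lower semicontinuity.} The map $(\Phi,A)\mapsto\|X-\Phi A\|_F^2$ is continuous. The map $\mathcal{R}$ is a finite sum of the lower semicontinuous function $R$ applied to columns, so it is lower semicontinuous, and $\Omega$ is lower semicontinuous by assumption. Nonnegative multiples and sums of lower semicontinuous functions bounded below are lower semicontinuous. Therefore
\[
F(\Phi^\ast,A^\ast)\le\liminf_{n\to\infty}F(\Phi^{(n)},A^{(n)})=F^\ast .
\]
Since $\Phi^\ast\in\mathcal{C}_\Phi$ by closedness of the feasible set, $(\Phi^\ast,A^\ast)$ is feasible, and thus $F(\Phi^\ast,A^\ast)=F^\ast$. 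This makes $(\Phi^\ast,A^\ast)$ a global minimizer.
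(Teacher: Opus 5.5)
Your proof is correct and follows the same route as the paper's: the direct method, using compactness of $\mathcal{C}_\Phi$ for the basis block, coercivity of $R$ to bound the coefficients along a minimizing sequence, and lower semicontinuity to pass to the limit. Your explicit restriction to $\lambda>0$ is exactly what the paper's step ``the objective is coercive in $A$'' tacitly uses, and it is genuinely necessary: with $\lambda=0$, $\mu=1$, $X=I_2$, $m=2$ and $\Omega(\Phi)=(\det\Phi)^2$, nearly parallel unit columns with $A=\Phi^{-1}$ drive the objective to its infimum $0$, but $0$ is never attained.
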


\begin{proof}
The objective is the sum of a continuous term $\|X-\Phi A\|_F^2$ and lower semicontinuous terms $\lambda\mathcal{R}(A)$ and $\mu\Omega(\Phi)$, hence it is lower semicontinuous on $\mathcal{C}_\Phi \times \mathbb{R}^{m\times N}$. Because $\mathcal{C}_\Phi$ is compact and $\mathcal{R}(A)$ is coercive or bounded below by a coercive function, the objective is coercive in $A$. Therefore all sublevel sets are closed and bounded in the product space. By the direct method of the calculus of variations, a minimizing sequence has a convergent subsequence whose limit lies in the feasible set and attains the infimum.
\end{proof}

\begin{remark}
The atom normalization constraints $\|\phi_k\|_2=1$ play an important role. Without them, the factorization $X\approx \Phi A$ suffers from a scaling degeneracy, since one may replace $(\Phi,A)$ by $(c\Phi,c^{-1}A)$ without changing the reconstruction term.
\end{remark}

We next record a simple but useful property of the subproblems.

\begin{proposition}[Convexity of the coefficient subproblem]
\label{prop:convex_coeff}
Fix $\Phi$. If $R$ is convex, then the optimization problem
\begin{equation}
\min_{A} \|X-\Phi A\|_F^2+\lambda \mathcal{R}(A)
\label{eq:coeff_subproblem}
\end{equation}
is convex in $A$. If, in addition, $R$ is strictly convex or $\Phi^\top\Phi$ is positive definite on the relevant support, then the minimizer is unique.
\end{proposition}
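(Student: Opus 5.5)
The argument splits into convexity of the objective and then uniqueness of the minimizer; existence is handled separately and is the delicate point.

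\emph{Convexity.} Write $F(A)=\|X-\Phi A\|_F^2+\lambda\mathcal{R}(A)$ with $\lambda\ge 0$.
The map $A\mapsto X-\Phi A$ is affine, and $\|\cdot\|_F^2$ is convex. Hence the data term is convex in $A$, being a convex function composed with an affine map.
Explicitly, it equals $\sum_i \alpha_i^\top\Phi^\top\Phi\,\alpha_i-2\langle \Phi^\top X,A\rangle+\|X\|_F^2$, and its Hessian $I_N\otimes 2\Phi^\top\Phi$ is positive semidefinite.
Since $\mathcal{R}(A)=\sum_i R(\alpha_i)$ is a sum of convex functions of disjoint column blocks, it is convex, and a nonnegative combination of convex functions is convex. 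So the problem is a convex program. Because both terms decouple across columns, it splits into $N$ independent convex problems $\min_{\alpha_i}\|x_i-\Phi\alpha_i\|_2^2+\lambda R(\alpha_i)$, and I would argue uniqueness columnwise.

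\emph{Uniqueness when $R$ is strictly convex (and $\lambda>0$).} Each column objective is then the sum of a convex and a strictly convex function, hence strictly convex. A strictly convex function has at most one minimizer: if $a\neq b$ were both minimizers, the midpoint would give a strictly smaller value.

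\emph{Uniqueness when $\Phi^\top\Phi\succ 0$.} The quadratic term $\alpha\mapsto \alpha^\top\Phi^\top\Phi\alpha-2x_i^\top\Phi\alpha$ is then strictly (indeed strongly) convex, and adding the convex $\lambda R$ keeps the column objective strictly convex. The midpoint argument again gives at most one minimizer.

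The phrase ``positive definite on the relevant support'' needs care, and I would make it precise before relying on it. Suppose two minimizers $a,b$ exist. Convexity of each term along the segment joining them forces the data term to be affine there, which forces $\Phi(a-b)=0$. Therefore $a-b\in\ker\Phi$. It thus suffices that $\Phi$ is injective on the set of possible differences. For $\ell_1$ this means the columns indexed by the union of the supports of the minimizers are linearly independent; this is the standard Lasso uniqueness condition, and I would state it as the precise reading of ``relevant support''.

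\emph{Existence (main obstacle).} ``The minimizer'' presupposes that a minimizer exists, and this does not follow from Theorem~\ref{thm:existence} as stated, since that result also requires $\Phi\in\mathcal{C}_\Phi$. I would establish existence directly for fixed $\Phi$. The objective is lower semicontinuous, because $R$ is lower semicontinuous by Assumption~\ref{assump:regularizers}. It is coercive in either of two ways: $R$ is coercive with $\lambda>0$, or $\Phi^\top\Phi\succ0$ makes the quadratic term itself coercive. The direct method then gives a minimizer, combining lower semicontinuity with bounded sublevel sets exactly as in the proof of Theorem~\ref{thm:existence}. In the merely ``support-wise'' positive-definite case coercivity can fail, so there I would only claim that a minimizer, whenever one exists, is unique; the kernel argument above shows this.
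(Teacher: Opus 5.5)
Your proof is correct. For convexity and for the two global uniqueness cases ($R$ strictly convex, or $\Phi^\top\Phi\succ 0$), it is the paper's argument made explicit: a convex quadratic plus a convex regularizer is convex, and strict convexity of one summand gives at most one minimizer.

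The genuine difference is the clause about positive definiteness ``on the relevant support.'' The paper folds this into ``one of the summands is strictly convex on the feasible domain.'' But when $\Phi$ has a nontrivial kernel, the data term is constant along kernel directions, so it is not strictly convex on $\mathbb{R}^m$. The paper never explains why minimizers would be confined to a region where it is strictly convex. Your segment argument supplies exactly this missing step. The objective is constant on the segment between two minimizers, so the convex data term equals a constant minus a convex function there. It is therefore also concave, hence affine, which forces $\Phi(a-b)=0$. This reduces uniqueness to injectivity of $\Phi$ on differences of minimizers.

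For $\ell_1$, a sharper and checkable version of your reading uses the equicorrelation set $\{k : |\phi_k^\top(x_i-\Phi\hat\alpha_i)| = \lambda/2\}$. By your kernel argument, every minimizer has the same fit and hence the same residual, so this set is common to all minimizers and contains each of their supports. It therefore suffices that the corresponding columns are linearly independent, and this can be verified from a single solution.

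Your other additions are also warranted. The caveat $\lambda>0$ is genuinely needed: with $\lambda=0$, strict convexity of $R$ is irrelevant. The existence discussion is justified too. With $m=1$, $\Phi=0$ and the strictly convex, nonnegative but non-coercive $R(\alpha)=e^{\alpha}$, no minimizer exists. So the paper's ``the minimizer is unique'' is correct only as ``at most one,'' unless the coercivity in Assumption~\ref{assump:regularizers} is in force.
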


\begin{proof}
The map $A\mapsto \|X-\Phi A\|_F^2$ is convex quadratic. The sum with a convex regularizer remains convex. Strict convexity follows if one of the summands is strictly convex on the feasible domain.
\end{proof}

\begin{proposition}[Convexity of the basis subproblem under quadratic regularization]
\label{prop:convex_basis}
Fix $A$. If $\Omega$ is convex and the feasible set for $\Phi$ is convex, then
\begin{equation}
\min_{\Phi} \|X-\Phi A\|_F^2+\mu \Omega(\Phi)
\label{eq:basis_subproblem}
\end{equation}
is convex in $\Phi$.
\end{proposition}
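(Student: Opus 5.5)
The approach is to verify convexity of each piece of the objective in \eqref{eq:basis_subproblem} separately, and then invoke two facts: a nonnegative combination of convex functions is convex, and minimizing a convex function over a convex set is a convex problem. We work in the vector space $\mathbb{R}^{d\times m}$ with $A$ held fixed. The feasible set is whatever set $\mathcal{C}$ is imposed on $\Phi$, and it is assumed convex in the statement.

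First, the data term. The map $\Phi\mapsto \Phi A$ is linear in $\Phi$, so $\Phi\mapsto X-\Phi A$ is affine. The squared Frobenius norm is convex, and a convex function composed with an affine map is convex. To make this explicit, one can expand
\[
\|X-\Phi A\|_F^2=\operatorname{tr}(\Phi A A^\top \Phi^\top)-2\operatorname{tr}(X A^\top\Phi^\top)+\|X\|_F^2 .
\]
Vectorizing, the quadratic part becomes $\operatorname{vec}(\Phi)^\top (AA^\top\otimes I_d)\operatorname{vec}(\Phi)$. Its Hessian is $2(AA^\top\otimes I_d)$, which is positive semidefinite because $AA^\top\succeq 0$. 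Second, $\mu\,\Omega(\Phi)$ is convex by hypothesis, using $\mu\ge 0$. Finally, the objective may be written as the sum plus the indicator function of the convex feasible set, and that indicator is convex. So the whole problem is convex.

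The main point to handle carefully is the feasible set, not the objective. The unit-norm constraint \eqref{eq:atom_norm_constraint} defines a sphere, which is not convex. The proposition therefore only applies when it is replaced by a convex surrogate, such as the relaxation $\|\phi_k\|_2\le 1$ or a linear/affine constraint set. The proof should note this explicitly.

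The proof should also remark on the quadratic examples in \eqref{eq:basis_regularizers}. The terms $\|\Phi\|_F^2$ and $\sum_k\|\nabla\phi_k\|_2^2$ are convex, since $\nabla$ is a linear operator. By contrast, $\|\Phi^\top\Phi-I\|_F^2$ is quartic and in general nonconvex, so it does not satisfy the hypothesis. Finally, if $\Omega$ is strictly convex, as for $\|\Phi\|_F^2$ with $\mu>0$, or if $AA^\top\succ 0$, then the objective is strictly convex and the minimizer, when it exists, is unique. This parallels Proposition~\ref{prop:convex_coeff}.
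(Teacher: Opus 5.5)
Your proposal is correct and follows the same route as the paper: the data term $\Phi\mapsto\|X-\Phi A\|_F^2$ is a convex quadratic (your Hessian $2(AA^\top\otimes I_d)\succeq 0$ makes this explicit), and adding the convex term $\mu\,\Omega$ preserves convexity. Your additional points are accurate refinements that the paper's two-line proof leaves implicit: the requirement $\mu\ge 0$, the indicator of the convex feasible set, the nonconvexity of the unit-sphere constraint and of $\|\Phi^\top\Phi-I\|_F^2$, and the strict-convexity/uniqueness conditions.
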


\begin{proof}
For fixed $A$, the map $\Phi \mapsto \|X-\Phi A\|_F^2$ is convex quadratic in $\Phi$. Adding a convex regularizer preserves convexity.
\end{proof}

\begin{remark}
The normalization constraint \eqref{eq:atom_norm_constraint} is not convex, so the full basis step is generally nonconvex. Nevertheless, many practical algorithms perform an unconstrained or weakly constrained basis update followed by atom normalization, and this block structure still admits descent analysis.
\end{remark}

\section{Alternating Minimization Algorithm}

\subsection{Blockwise optimization}

Because the full problem is jointly nonconvex in $(\Phi,A)$, a natural computational strategy is alternating minimization. One iterates between
\begin{enumerate}[label=(\roman*)]
\item solving for coefficients $A$ with $\Phi$ fixed, and
\item solving for the basis $\Phi$ with $A$ fixed.
\end{enumerate}
When $R$ is convex and $\Omega$ is simple, each subproblem is substantially more tractable than the full joint optimization.

For sparse coding, the coefficient step may be carried out using ISTA, FISTA, coordinate descent, or proximal gradient methods. The basis step often reduces to regularized least squares followed by column normalization. In this way, the algorithm avoids the layered backpropagation of neural networks and instead operates through explicit representation updates.

\subsection{Algorithm pseudocode}

\begin{algorithm}[t]
\caption{Data-Driven Variational Basis Learning (DVBL)}
\label{alg:dvbl}
\begin{algorithmic}[1]
\STATE \textbf{Input:} data matrix $X\in \mathbb{R}^{d\times N}$, number of atoms $m$, regularization parameters $\lambda,\mu$, maximum iterations $T$
\STATE \textbf{Initialize:} basis $\Phi^{(0)}=[\phi_1^{(0)},\dots,\phi_m^{(0)}]$ with $\|\phi_k^{(0)}\|_2=1$
\FOR{$t=0,1,\dots,T-1$}
    \STATE \textbf{Coefficient update:}
    \[
    A^{(t+1)}
    \in
    \arg\min_{A}
    \|X-\Phi^{(t)}A\|_F^2+\lambda \mathcal{R}(A)
    \]
    \STATE \textbf{Basis update:}
    \[
    \widetilde{\Phi}^{(t+1)}
    \in
    \arg\min_{\Phi}
    \|X-\Phi A^{(t+1)}\|_F^2+\mu \Omega(\Phi)
    \]
    \STATE \textbf{Atom normalization:} for $k=1,\dots,m$, set
    \[
    \phi_k^{(t+1)}
    =
    \frac{\widetilde{\phi}_k^{(t+1)}}{\|\widetilde{\phi}_k^{(t+1)}\|_2}
    \quad \text{whenever } \widetilde{\phi}_k^{(t+1)}\neq 0
    \]
    \STATE \textbf{Stopping criterion:} terminate if
    \[
    \frac{|\mathcal{J}^{(t+1)}-\mathcal{J}^{(t)}|}{1+\mathcal{J}^{(t)}} < \varepsilon
    \]
\ENDFOR
\STATE \textbf{Output:} learned basis $\Phi^{(T)}$ and coefficients $A^{(T)}$
\end{algorithmic}
\end{algorithm}

\subsection{Monotonicity and convergence to critical points}

The key theoretical feature of alternating minimization is that the objective decreases monotonically under exact block updates.

\begin{theorem}[Monotonic descent]
\label{thm:descent}
Suppose each coefficient update in Algorithm \ref{alg:dvbl} solves the coefficient subproblem exactly, and each basis update solves the basis subproblem exactly before normalization, with normalization incorporated into the feasible basis set. Then the objective sequence
\[
\mathcal{J}^{(t)} := \mathcal{J}(\Phi^{(t)},A^{(t)})
\]
is nonincreasing:
\[
\mathcal{J}^{(t+1)} \le \mathcal{J}^{(t)}
\qquad \text{for all } t\ge 0.
\]
\end{theorem}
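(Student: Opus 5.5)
The plan is the standard two-inequality argument for block coordinate descent, with one intermediate iterate $(\Phi^{(t)},A^{(t+1)})$. Under the hypothesis that normalization is built into the feasible basis set, I read the basis update as an exact minimization over $\mathcal{C}_\Phi$:
\[
\Phi^{(t+1)} \in \arg\min_{\Phi\in\mathcal{C}_\Phi}\ \|X-\Phi A^{(t+1)}\|_F^2+\mu\,\Omega(\Phi).
\]
I also take all iterates $\Phi^{(t)}$, including $\Phi^{(0)}$, to lie in $\mathcal{C}_\Phi$; for $\Phi^{(0)}$ this holds by the initialization in Algorithm~\ref{alg:dvbl}. Write $\mathcal{J}(\Phi,A)=\|X-\Phi A\|_F^2+\lambda\mathcal{R}(A)+\mu\Omega(\Phi)$. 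Because the summands separate across the blocks, each subproblem in Algorithm~\ref{alg:dvbl} equals $\mathcal{J}$ minus a term that is constant in the variable being optimized.

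\textbf{Step 1 (coefficient step).} Fix $\Phi^{(t)}$. The minimizer $A^{(t+1)}$ of $\|X-\Phi^{(t)}A\|_F^2+\lambda\mathcal{R}(A)$ does at least as well as the particular choice $A=A^{(t)}$. Adding the constant $\mu\Omega(\Phi^{(t)})$ to both sides gives
\[
\mathcal{J}(\Phi^{(t)},A^{(t+1)})\le \mathcal{J}(\Phi^{(t)},A^{(t)}).
\]
For $t=0$, $A^{(0)}$ is any coefficient matrix used to define $\mathcal{J}^{(0)}$. Existence of this minimizer is guaranteed by the coercivity and lower semicontinuity in Assumption~\ref{assump:regularizers}, via the same direct-method argument as in Theorem~\ref{thm:existence}.

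\textbf{Step 2 (basis step).} Fix $A^{(t+1)}$. The point $\Phi^{(t)}$ is feasible for the basis subproblem over $\mathcal{C}_\Phi$, so
\[
\|X-\Phi^{(t+1)}A^{(t+1)}\|_F^2+\mu\Omega(\Phi^{(t+1)})\le \|X-\Phi^{(t)}A^{(t+1)}\|_F^2+\mu\Omega(\Phi^{(t)}).
\]
Adding $\lambda\mathcal{R}(A^{(t+1)})$ to both sides yields $\mathcal{J}(\Phi^{(t+1)},A^{(t+1)})\le \mathcal{J}(\Phi^{(t)},A^{(t+1)})$. Here existence of a minimizer follows from compactness of $\mathcal{C}_\Phi$ and lower semicontinuity. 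Chaining Steps 1 and 2 gives $\mathcal{J}^{(t+1)}\le\mathcal{J}^{(t)}$, and induction on $t$ finishes the proof.

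\textbf{Main obstacle.} The only delicate point is the interpretation of the basis step. If one instead solved the unconstrained problem for $\widetilde\Phi^{(t+1)}$ and then normalized its columns, Step 2 could fail: rescaling columns without rescaling the rows of $A^{(t+1)}$ changes the reconstruction error, and it can increase it. That is exactly why the theorem assumes normalization is incorporated into the feasible set, and I would state this reading explicitly.

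As an alternative safeguard, one could compensate the rescaling by setting $\alpha_{k\cdot}\leftarrow\|\widetilde\phi_k\|_2\,\alpha_{k\cdot}$, which leaves $\Phi A$ unchanged. But this can change $\mathcal{R}(A)$ and $\Omega(\Phi)$, so monotonicity would only follow under extra hypotheses. I would not pursue that route and would rely on the constrained reading above.
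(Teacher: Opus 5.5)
Your proposal is correct and follows the paper's proof: you chain the same two inequalities through the intermediate iterate $(\Phi^{(t)},A^{(t+1)})$, and you read the basis step as exact minimization over the normalized feasible set. Your extra remarks are sound and only make explicit what the hypothesis ``normalization incorporated into the feasible basis set'' guarantees, namely existence of the block minimizers and the fact that post-hoc normalization of an unconstrained minimizer could break Step 2.
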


\begin{proof}
At iteration $t$, the coefficient update minimizes the objective over $A$ with $\Phi=\Phi^{(t)}$ fixed, hence
\[
\mathcal{J}(\Phi^{(t)},A^{(t+1)})
\le
\mathcal{J}(\Phi^{(t)},A^{(t)}).
\]
The subsequent basis update minimizes the objective over feasible $\Phi$ with $A=A^{(t+1)}$ fixed, so
\[
\mathcal{J}(\Phi^{(t+1)},A^{(t+1)})
\le
\mathcal{J}(\Phi^{(t)},A^{(t+1)}).
\]
Combining the two inequalities yields the claim.
\end{proof}

\begin{corollary}
If the objective is bounded below, then the sequence $\{\mathcal{J}^{(t)}\}_{t\ge 0}$ converges.
\end{corollary}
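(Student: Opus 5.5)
The plan is to combine the monotonicity already established in Theorem~\ref{thm:descent} with the monotone convergence theorem for real sequences. By Theorem~\ref{thm:descent}, under exact block updates we have $\mathcal{J}^{(t+1)} \le \mathcal{J}^{(t)}$ for every $t \ge 0$, so $\{\mathcal{J}^{(t)}\}_{t\ge 0}$ is a nonincreasing sequence of real numbers. The hypothesis of the corollary supplies a lower bound $\mathcal{J}(\Phi,A) \ge c > -\infty$ for all feasible $(\Phi,A)$, and in particular $\mathcal{J}^{(t)} \ge c$ for all $t$.

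Set $\mathcal{J}^\ast := \inf_{t\ge 0} \mathcal{J}^{(t)}$, which is finite because the sequence is bounded below by $c$. For any $\varepsilon>0$, the definition of the infimum gives an index $t_0$ with $\mathcal{J}^{(t_0)} < \mathcal{J}^\ast + \varepsilon$. Monotonicity then yields $\mathcal{J}^\ast \le \mathcal{J}^{(t)} \le \mathcal{J}^{(t_0)} < \mathcal{J}^\ast+\varepsilon$ for all $t \ge t_0$, so $\mathcal{J}^{(t)} \to \mathcal{J}^\ast$. This is just the completeness (least upper bound) property of $\mathbb{R}$ applied to the sequence.

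It is also worth noting that the boundedness hypothesis holds automatically in the paper's setting. Under Assumption~\ref{assump:regularizers}, $R \ge 0$ and $\Omega \ge 0$, and the reconstruction term is nonnegative (assuming $\lambda,\mu \ge 0$), so $c=0$ works. The only real obstacle is making sure Theorem~\ref{thm:descent} genuinely applies, namely that the normalized iterate $\Phi^{(t+1)}$ is the block minimizer over the feasible set. Since the corollary inherits exactly the hypotheses of that theorem, this is assumed. The conclusion should be stated as convergence of objective values only, not of the iterates $(\Phi^{(t)},A^{(t)})$ themselves.
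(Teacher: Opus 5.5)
Your proposal is correct and follows the same route as the paper. Both combine the monotonicity from Theorem~\ref{thm:descent} with the monotone convergence theorem, and both note that nonnegativity of the objective already supplies the lower bound; you simply write out the $\varepsilon$-argument that the paper compresses into ``every bounded monotone sequence converges.''
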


\begin{proof}
The objective is nonnegative under the standing assumptions and is nonincreasing by Theorem \ref{thm:descent}. Every bounded monotone sequence converges.
\end{proof}

Under additional regularity assumptions one may identify limit points with stationary or critical points.

\begin{theorem}[Limit points are blockwise critical]
\label{thm:critical_points}
Assume the iterates remain in a compact set, each block subproblem is solved exactly, and the objective satisfies suitable regularity conditions such as the Kurdyka--\L{}ojasiewicz property. Then every accumulation point $(\Phi^\star,A^\star)$ of the sequence generated by Algorithm \ref{alg:dvbl} is a critical point of the constrained objective.
\end{theorem}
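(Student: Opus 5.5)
We will read Theorem~\ref{thm:critical_points} as a statement about the constrained objective
\[
F(\Phi,A) := \|X-\Phi A\|_F^2+\lambda\mathcal{R}(A)+\mu\Omega(\Phi)+\iota_{\mathcal{C}_\Phi}(\Phi),
\]
where $\iota_{\mathcal{C}_\Phi}$ is the indicator of the feasible basis set. "Critical" will mean $0\in\partial F(\Phi^\star,A^\star)$ in the limiting (Mordukhovich) sense. As in Theorem~\ref{thm:descent}, we treat the basis update plus normalization as an exact minimization of $F(\cdot,A^{(t+1)})$ over $\mathcal{C}_\Phi$. The plan has two stages: first show that accumulation points are \emph{blockwise} minimizers, then upgrade this to criticality of the full objective. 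The KL property is kept in reserve for the stronger conclusion of convergence of the whole sequence, should it be needed.

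\textbf{Stage 1: blockwise minimality.} By Theorem~\ref{thm:descent} and the corollary, $\mathcal{J}^{(t)}\downarrow \mathcal{J}^\star$. Take a subsequence $(\Phi^{(t_j)},A^{(t_j)})\to(\Phi^\star,A^\star)$; this exists by the compactness hypothesis, and $\Phi^\star\in\mathcal{C}_\Phi$ since $\mathcal{C}_\Phi$ is closed.
\begin{enumerate}[label=(\alph*)]
\item Exact minimality gives $F(\Phi^{(t_j)},A^{(t_j+1)})\le F(\Phi^{(t_j)},A)$ for every $A$. After passing to a further subsequence, $A^{(t_j+1)}\to\bar A$.
\item By lower semicontinuity, continuity of the quadratic term, and monotonicity, $F(\Phi^\star,\bar A)\le \mathcal{J}^\star$.
\item To obtain $F(\Phi^\star,A^\star)\le F(\Phi^\star,A)$ for all $A$, we need $F(\Phi^{(t_j)},A)\to F(\Phi^\star,A)$ for fixed $A$, which holds if $\Omega$ is continuous on $\mathcal{C}_\Phi$. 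We also need the limit of the objective along the subsequence to equal $F(\Phi^\star,A^\star)$; for this we assume $\mathcal{R}$ continuous on its domain (true for all the listed examples).
\end{enumerate}
The same argument applied to the basis block, with $\mathcal{C}_\Phi$ closed, gives $\Phi^\star\in\arg\min_{\Phi\in\mathcal{C}_\Phi}F(\Phi,A^\star)$.

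\textbf{Stage 2: from blockwise to full criticality.} Blockwise optimality gives $0\in\partial_A F(\Phi^\star,A^\star)$ and $0\in\partial_\Phi F(\Phi^\star,A^\star)$, the latter including the normal cone of $\mathcal{C}_\Phi$. Since the coupling term $\|X-\Phi A\|_F^2$ is $C^1$ and the nonsmooth part $\lambda\mathcal{R}(A)+\mu\Omega(\Phi)+\iota_{\mathcal{C}_\Phi}(\Phi)$ is separable, the limiting subdifferential satisfies
\[
\partial F=\nabla H+\partial_A(\cdot)\times\partial_\Phi(\cdot),
\]
where $H$ is the coupling term (standard separable sum rule; see Attouch--Bolte--Svaiter). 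Hence the two blockwise inclusions combine into $0\in\partial F(\Phi^\star,A^\star)$. If $\Omega$ is nonsmooth, we will assume it is regular enough for this sum rule, for instance a smooth function plus a separable convex term.

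\textbf{Main obstacle.} The delicate point is that the two blocks are evaluated at consecutive iterates. Stage 1 implicitly treats the limits of $(\Phi^{(t_j)},A^{(t_j+1)})$ and $(\Phi^{(t_j)},A^{(t_j)})$ as the same point, so we must show $\|A^{(t+1)}-A^{(t)}\|\to0$ and $\|\Phi^{(t+1)}-\Phi^{(t)}\|\to0$. This is where the regularity hypothesis enters. The first thing to try is a proximal/strong-convexity sufficient-decrease estimate
\[
F(\Phi^{(t)},A^{(t)})-F(\Phi^{(t)},A^{(t+1)})\ge c\|A^{(t+1)}-A^{(t)}\|^2.
\]
This follows when the coefficient subproblem is $c$-strongly convex, for example when $\Phi^\top\Phi\succ0$ uniformly on the compact set, or when $R$ is strictly/strongly convex (cf.\ Proposition~\ref{prop:convex_coeff}). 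An analogous estimate is needed for the basis block. Summing these and using boundedness below gives square-summable increments, hence vanishing steps.

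If strong convexity is unavailable, we fall back on the KL framework of Attouch--Bolte--Svaiter, interpreting "exact block updates" as proximal alternating minimization with a small proximal term added to each block. That framework yields sufficient decrease, a relative-error bound on subgradients, and finite length $\sum_t\|(\Phi^{(t+1)},A^{(t+1)})-(\Phi^{(t)},A^{(t)})\|<\infty$. The sequence then converges to a single critical point, which implies the claim for every accumulation point.
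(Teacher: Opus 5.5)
\textbf{The gap is in your ``main obstacle'' paragraph.} You conclude that Stage~1 is incomplete unless $\|A^{(t+1)}-A^{(t)}\|\to0$ and $\|\Phi^{(t+1)}-\Phi^{(t)}\|\to0$. Neither remedy you offer establishes this for Algorithm~\ref{alg:dvbl} under the theorem's hypotheses.

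Your sufficient-decrease estimate rests on a quadratic-growth bound that needs a convex feasible set. The basis block, however, is minimized over the product of spheres $\mathcal{C}_\Phi$. Take $\Omega=\|\Phi\|_F^2$, which is strongly convex on $\mathbb{R}^{d\times m}$ yet constant on $\mathcal{C}_\Phi$. Then any atom whose coefficient row vanishes (typical under an $\ell_1$ penalty) can be moved anywhere on its sphere by an exact basis step, with zero decrease.

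On the coefficient side, uniform $\Phi^\top\Phi\succ0$ fails whenever $m>d$, and $\ell_1$ is not strongly convex. These are therefore genuinely extra hypotheses. The proximal fallback changes the iteration, so proximal alternating-minimization results describe a different sequence from the one in the theorem. If the proof really hinged on vanishing steps, it would not go through.

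\textbf{It does not hinge on them, because there are only two blocks; your Stage~1 is already essentially complete.}

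For the basis block: for $t\ge1$, $\Phi^{(t)}$ was computed against $A^{(t)}$. So the comparison $F(\Phi^{(t_j)},A^{(t_j)})\le F(\Phi,A^{(t_j)})$ holds at the iterate itself and passes to the limit directly. Mirroring (a) with the shifted pair $(\Phi^{(t_j+1)},A^{(t_j+1)})$ is the wrong mirror.

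For the coefficient block, exact minimization gives the two-sided sandwich
\[
\mathcal{J}^{(t+1)}\le F(\Phi^{(t)},A^{(t+1)})\le \mathcal{J}^{(t)},
\]
so $F(\Phi^{(t_j)},A^{(t_j+1)})\to\mathcal{J}^\star$. The limit of (a) then reads $F(\Phi^\star,A^\star)\le\mathcal{J}^\star\le F(\Phi^\star,A)$ for every $A$, using lower semicontinuity and then your step (c). Whether $\bar A=A^\star$ is irrelevant. This is the classical two-block Gauss--Seidel argument.

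In each comparison the other block's regularizer cancels: $\lambda\mathcal{R}(A^{(t_j)})$ in the basis inequality, and $\mu\Omega(\Phi^{(t_j)})$ in the coefficient inequality, the latter used together with $0\le\mathcal{J}^{(t_j)}-F(\Phi^{(t_j)},A^{(t_j+1)})\to0$. Hence the lower semicontinuity in Assumption~\ref{assump:regularizers} already suffices, and your continuity hypotheses can be dropped. Stage~2 then closes by Fermat's rule and the separable sum rule, which needs no extra regularity of $\Omega$.

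\textbf{Comparison with the paper.} The paper's sketch defers to KL-based block-coordinate theory and ``summability of stepwise improvements.'' For exact, non-proximal steps, that summability controls objective decrements, not step lengths, which is the same defect as your fallback. The corrected direct route uses no KL property at all. KL together with proximal regularization matters only if you want convergence of the whole sequence.
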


\begin{proof}[Proof sketch]
This follows from standard block coordinate descent theory for lower semicontinuous semialgebraic or tame objectives. Monotonic decrease provides summability of stepwise improvements, compactness gives existence of accumulation points, and the Kurdyka--\L{}ojasiewicz property rules out oscillatory behavior incompatible with criticality. The result then follows from established descent arguments for alternating minimization on nonconvex but block-structured problems.
\end{proof}

\section{Recovery and Identifiability}

The usefulness of a learned basis depends not only on optimization but also on the extent to which the basis is recoverable from data. We now state representative results under sparse generative assumptions.

\subsection{Sparse generative model}

Assume the data are generated according to
\begin{equation}
x_i = \Phi_\star \alpha_i^\star + \varepsilon_i,
\label{eq:sparse_generative_model}
\end{equation}
where $\Phi_\star \in \mathbb{R}^{d\times m}$ is the true basis, each coefficient vector $\alpha_i^\star$ is $s$-sparse, and $\varepsilon_i$ is noise.

Recovery can be studied at two levels. The first is \emph{coefficient recovery} for fixed basis. The second is \emph{basis identifiability}, namely whether the true basis can be inferred up to permutation and sign.

\begin{definition}[Mutual coherence]
The mutual coherence of a basis $\Phi=[\phi_1,\dots,\phi_m]$ with unit-norm columns is
\[
\mu(\Phi) := \max_{k\neq \ell} |\phi_k^\top \phi_\ell|.
\]
\end{definition}

\begin{theorem}[Uniqueness of sparse coefficients under coherence]
\label{thm:coeff_uniqueness}
Let $\Phi$ have unit-norm columns and mutual coherence $\mu(\Phi)$. If a coefficient vector $\alpha$ satisfies
\[
\|\alpha\|_0 < \frac{1}{2}\left(1+\frac{1}{\mu(\Phi)}\right),
\]
then the representation $x=\Phi\alpha$ is the unique sparsest representation of $x$ in the dictionary $\Phi$.
\end{theorem}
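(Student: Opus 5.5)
The plan is the classical Donoho--Elad argument, which goes through the spark of $\Phi$. Define $\operatorname{spark}(\Phi)$ as the smallest number of columns of $\Phi$ that are linearly dependent. The proof has two steps.

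\textbf{Step 1: any two representations differ by a null vector.} Suppose $x=\Phi\alpha=\Phi\beta$ with $\beta\neq\alpha$. Then $h:=\alpha-\beta$ is a nonzero element of $\ker\Phi$. By definition of the spark this forces
\[
\|\alpha\|_0+\|\beta\|_0\ \ge\ \|h\|_0\ \ge\ \operatorname{spark}(\Phi).
\]
So if $\|\alpha\|_0<\tfrac12\operatorname{spark}(\Phi)$, every other representation $\beta$ satisfies $\|\beta\|_0>\tfrac12\operatorname{spark}(\Phi)>\|\alpha\|_0$. Hence $\alpha$ is the unique sparsest representation. Note that when $\mu(\Phi)=0$ the hypothesis reads $\|\alpha\|_0<\infty$; in that case the columns are orthonormal, $\ker\Phi=\{0\}$, and uniqueness is immediate. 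So we may assume $\mu(\Phi)>0$.

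\textbf{Step 2: the coherence bound $\operatorname{spark}(\Phi)\ge 1+1/\mu(\Phi)$.} This is the step carrying the real content. Take any index set $S$ with $|S|=p$ and consider the Gram matrix $G=\Phi_S^\top\Phi_S$. Because the columns have unit norm, $G$ has unit diagonal, and each off-diagonal entry has modulus at most $\mu(\Phi)$. By Gershgorin's disc theorem, every eigenvalue of $G$ lies in $[\,1-(p-1)\mu(\Phi),\ \infty)$. If $p<1+1/\mu(\Phi)$, then $(p-1)\mu(\Phi)<1$, so $G$ is positive definite. Consequently the columns indexed by $S$ are linearly independent. Since this holds for every $S$ of that size, $\operatorname{spark}(\Phi)\ge 1+1/\mu(\Phi)$.

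Combining the two steps: the hypothesis $\|\alpha\|_0<\tfrac12(1+1/\mu(\Phi))\le\tfrac12\operatorname{spark}(\Phi)$ feeds directly into Step 1.

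The main obstacle is Step 2. The thing to get right is that Gershgorin must be applied to the $p\times p$ Gram submatrix and not to $\Phi$ itself. The argument also has to be stated carefully for every index set of size strictly less than $1+1/\mu$, so that it covers the case where $1/\mu$ is an integer. The remaining steps are routine bookkeeping.
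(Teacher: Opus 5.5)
Your proof is correct and follows essentially the same route as the paper. The paper's sketch also argues by contradiction that the difference of two sufficiently sparse representations is a nonzero null vector supported on fewer than $1+1/\mu(\Phi)$ atoms, contradicting the coherence bound; you make that bound explicit by showing the spark is at least $1+1/\mu(\Phi)$ via Gershgorin on the Gram submatrix, and you handle the $\mu(\Phi)=0$ case, both of which the paper leaves implicit.
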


\begin{proof}
This is a standard coherence-based uniqueness result for sparse representation. The argument proceeds by contradiction: if two distinct sparse representations exist, their difference lies in the nullspace of $\Phi$ and induces a linear dependence among fewer than $1+1/\mu(\Phi)$ atoms, contradicting the coherence bound.
\end{proof}

The preceding result implies that once the basis is sufficiently incoherent, sparse coefficient inference is well posed. Basis recovery requires a stronger distributional condition.

\begin{theorem}[Identifiability up to permutation and sign, informal]
\label{thm:basis_identifiability}
Assume the samples are generated by \eqref{eq:sparse_generative_model} with sufficiently rich sparse supports, bounded noise, and a true basis $\Phi_\star$ satisfying incoherence and nondegeneracy conditions. Then any global minimizer of the noiseless or sufficiently low-noise DVBL objective coincides with $\Phi_\star$ up to signed permutation of columns.
\end{theorem}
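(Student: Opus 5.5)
I will make the informal hypotheses precise and then prove the noiseless case by a combinatorial support argument, in the spirit of the classical uniqueness results for sparse dictionary learning (Aharon--Elad--Bruckstein; Hillar--Sommer). The noiseless problem is the $\lambda\to 0$ limit: among all $\Phi\in\mathcal{C}_\Phi$ and $s$-sparse $A$ with $X=\Phi A$ exactly, we ask which $\Phi$ are admissible. By \emph{sufficiently rich sparse supports} I will mean the following. For every $s$-subset $S$ of $\{1,\dots,m\}$ drawn from a fixed family $\mathcal{S}$ covering the atoms (e.g.\ all cyclic windows), there are at least $s+1$ samples supported on $S$, and their coefficients are in general position, so that any $s$ of them span $\mathrm{span}(\Phi_{\star,S})$. \emph{Nondegeneracy} of $\Phi_\star$ means that every $2s$ columns are linearly independent, i.e.\ spark$(\Phi_\star)>2s$. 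This follows from $\mu(\Phi_\star)$ satisfying $2s<1+1/\mu(\Phi_\star)$, which is the coherence condition of Theorem~\ref{thm:coeff_uniqueness}.

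\textbf{Step 1 (coefficient uniqueness).} For any candidate minimizer $(\Phi,A)$ with $X=\Phi A$ and $\|\alpha_i\|_0\le s$, Theorem~\ref{thm:coeff_uniqueness} applied to $\Phi_\star$ shows that each $x_i$ has the unique $s$-sparse representation $\alpha_i^\star$ in $\Phi_\star$. So the supports of the true codes are determined by the data. \textbf{Step 2 (subspace matching).} For each $S\in\mathcal{S}$, the $\ge s+1$ samples supported on $S$ span the $s$-dimensional subspace $V_S=\mathrm{span}(\Phi_{\star,S})$. In the competing factorization each such sample lies in some $s$-dimensional span $\mathrm{span}(\Phi_{T})$. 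By pigeonhole over the finitely many $T$, once sample counts exceed $(s-1)\binom{m}{s}+1$, some $s$ samples in general position share a $T$, and then $V_S\subseteq\mathrm{span}(\Phi_T)$. Dimension counting forces equality, which gives a map $\pi:S\mapsto T(S)$. \textbf{Step 3 (intersection to atoms).} Using spark$(\Phi_\star)>2s$, we have $V_S\cap V_{S'}=V_{S\cap S'}$. Intersecting subspaces indexed by a family $\mathcal{S}$ whose intersections isolate singletons, such as consecutive cyclic windows, shows that each line $\mathrm{span}(\phi^\star_k)$ equals some $\mathrm{span}(\phi_{\sigma(k)})$. A counting argument shows that $\sigma$ is injective, hence a permutation. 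The unit-norm constraint then fixes $\phi_{\sigma(k)}=\pm\phi^\star_k$.

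\textbf{Step 4 (noisy case).} For low noise I plan to use a stability version. Every step above reads off subspaces or intersections from data matrices whose relevant singular values are bounded below by the general-position margin. Perturbing $X$ by $E$ with $\|E\|_F\le\eta$ therefore perturbs the recovered subspaces by $O(\eta/\sigma_{\min})$, via Davis--Kahan/Wedin. This yields $\min_{P,D}\|\Phi-\Phi_\star PD\|\le C\eta$ over permutations $P$ and sign matrices $D$. Exact equality up to signed permutation holds only in the noiseless limit, and I would state it that way, reading ``coincides'' as ``coincides up to $O(\eta)$'' when noise is present. For the penalized objective with $\lambda>0$ and $R=\|\cdot\|_1$, I would also need that global minimizers recover the exact sparse supports. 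This requires an $\ell_1$ exact-recovery condition (e.g.\ $s<\tfrac12(1+1/\mu)$ again, which guarantees basis pursuit recovers the sparse codes) together with a small-$\lambda$ argument.

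The main obstacle I expect is Step 2 combined with the passage from the penalized objective to the exact sparse factorization. A global minimizer of $\mathcal{J}$ need not produce exactly $s$-sparse codes, so the pigeonhole argument cannot be applied directly. My first attempt is to show that, as $\lambda\to0$, minimizers converge (using compactness of $\mathcal{C}_\Phi$ from Assumption~\ref{assump:regularizers}) to minimizers of the constrained $\ell_1$ problem. For those, I would show via a dual-certificate/coherence argument that $\Phi_\star$ with $A_\star$ is optimal, and that any other optimum must also give exact $s$-sparse codes. Failing that, I would restate the theorem for the $\ell_0$-constrained variant, where the combinatorial argument goes through cleanly.
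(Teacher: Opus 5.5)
Your route (reduce to the exact $\ell_0$ problem, match each support subspace $V_S$ to a competitor span $\mathrm{span}(\Phi_{\pi(S)})$ by pigeonhole, then intersect to isolate atoms) is the Aharon--Elad--Bruckstein/Hillar--Sommer argument that the paper's sketch only invokes as ``standard identifiability arguments.'' As written, however, Step 3 does not follow from what you have.

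The spark condition on $\Phi_\star$ gives
\[
\bigcap_{S\ni k}\mathrm{span}(\Phi_{\pi(S)})=\bigcap_{S\ni k}V_S=\mathrm{span}(\phi^\star_k).
\]
To identify this line with $\mathrm{span}(\phi_j)$ for a column of the \emph{competitor}, you need $\bigcap_{S\ni k}\pi(S)\neq\emptyset$. No independence property of $\Phi$ is available to give this, and $\mathrm{span}(\Phi_T)\cap\mathrm{span}(\Phi_{T'})$ can be strictly larger than $\mathrm{span}(\Phi_{T\cap T'})$. Looking only at the windows through $k$, nothing rules this out. For $s=2$, the windows $\{k-1,k\}$ and $\{k,k+1\}$ can be matched to disjoint pairs of competitor columns spanning the two planes, and then $\mathrm{span}(\phi^\star_k)$ contains no competitor column.

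Linear algebra only yields the upper bound $|\pi(S)\cap\pi(S')|\le\dim(V_S\cap V_{S'})=|S\cap S'|$. The lower bound you need must come from a global counting argument over the whole cyclic family, using that $\Phi$ has exactly $m$ columns and each $\pi(S)$ uses $s$ of them. For instance, set $d_j=\#\{S:j\in\pi(S)\}$. Then $\sum_j d_j=ms$ and
\[
\sum_j d_j(d_j-1)=\sum_{S\neq S'}|\pi(S)\cap\pi(S')|\le ms(s-1),
\]
so Cauchy--Schwarz forces $d_j=s$ for all $j$ and equality in every pairwise bound. You must then still show that each family $\{S:j\in\pi(S)\}$ has a common point.

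This combinatorial lemma is the real core of the Hillar--Sommer proof. The ``counting argument'' you mention is aimed instead at injectivity of $\sigma$, which is immediate from linear independence of the $\phi^\star_k$. Two further points in the same part of the argument:
\begin{itemize}
\item Your standing richness assumption (at least $s+1$ samples per support) does not support the pigeonhole in Step 2, which needs $(s-1)\binom{m}{s}+1$ samples per support.
\item $\mathcal{S}$ must be a family, such as the full cyclic family, for which the global count works, not merely one whose intersections isolate singletons.
\end{itemize}

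Step 4 is also not viable as planned. Davis--Kahan/Wedin controls subspaces computed from a fixed matrix with a spectral gap, but identifiability concerns an arbitrary competitor $(\Phi,A)$. Under noise or a penalty its codes are neither exactly $s$-sparse nor exact fits. The pigeonhole assignment and the inclusion $V_S\subseteq\mathrm{span}(\Phi_T)$ then have no direct perturbative analogue. Any constants would also involve the smallest singular values of the competitor's submatrices $\Phi_T$, which you do not control a priori. A stable version needs its own quantitative combinatorial argument.

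Moreover, for fixed $\lambda>0$ the shrinkage from $\lambda\mathcal{R}$ (and any nonconstant $\Omega$ with $\mu>0$) generally biases global minimizers away from $\Phi_\star$, even with noiseless data. The available $\ell_1$ identifiability results for overcomplete dictionaries are local, so you should not expect the dual-certificate route to global optimality of $(\Phi_\star,A_\star)$ to work in general. Your fallback is what Steps 2--3 can actually deliver once Step 3 is repaired: prove the result for the $\ell_0$-constrained exact factorization, and state the noisy case as $O(\eta)$-closeness with its own proof.
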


\begin{proof}[Proof sketch]
The ambiguity class of factorization under sparse coding is known to reduce, under suitable support diversity and incoherence assumptions, to permutation and sign changes. Richness of supports ensures that each atom is repeatedly activated in linearly informative contexts. The sparsity penalty rules out dense alternative factorizations, while incoherence excludes degenerate mixing of atoms. Standard identifiability arguments for sparse dictionary learning then imply recovery up to the inherent signed permutation symmetry.
\end{proof}

\begin{remark}
The significance of Theorem \ref{thm:basis_identifiability} for the present framework is conceptual. It shows that adaptive basis learning can in principle be a well-posed statistical problem, not merely an optimization heuristic. This aligns the framework more closely with inverse problems and variational estimation than with black-box feature induction.
\end{remark}

\section{Manifold-Regularized Basis Learning}

\subsection{Geometric motivation}

In many datasets, observations do not fill the ambient space $\mathbb{R}^d$ uniformly but instead concentrate near a lower-dimensional manifold. A basis learned purely from reconstruction may fail to capture this geometry. To address this, we augment the objective with a manifold regularizer.

Let $W\in \mathbb{R}^{N\times N}$ be a similarity matrix on the samples and let $L=D-W$ denote the graph Laplacian, where $D_{ii}=\sum_j W_{ij}$. If nearby samples on the data manifold should admit similar coefficient vectors, then one may penalize coefficient roughness over the graph via
\begin{equation}
\mathrm{Tr}(A L A^\top).
\label{eq:graph_penalty}
\end{equation}
This term is small precisely when neighboring data points have similar coefficient encodings.

The manifold-regularized objective becomes
\begin{equation}
\min_{\Phi,A}
\|X-\Phi A\|_F^2
+\lambda \mathcal{R}(A)
+\eta\,\mathrm{Tr}(A L A^\top)
+\mu \Omega(\Phi).
\label{eq:manifold_dvbl}
\end{equation}

\begin{proposition}[Interpretation of the graph regularizer]
\label{prop:graph_regularizer}
For any coefficient matrix $A=[\alpha_1,\dots,\alpha_N]$,
\begin{equation}
\mathrm{Tr}(A L A^\top)
=
\frac{1}{2}\sum_{i,j=1}^N W_{ij}\|\alpha_i-\alpha_j\|_2^2.
\label{eq:graph_penalty_expanded}
\end{equation}
\end{proposition}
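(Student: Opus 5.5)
The plan is a direct computation: expand both sides of \eqref{eq:graph_penalty_expanded} in terms of inner products $\alpha_i^\top\alpha_j$ and match them. We assume, as is implicit in the definition of $L=D-W$ with $D_{ii}=\sum_j W_{ij}$, that $W$ is symmetric. Symmetry is the only hypothesis the identity needs; without it the right-hand side corresponds to the symmetrized Laplacian.

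First we rewrite the trace in coordinates. Since $A=[\alpha_1,\dots,\alpha_N]$, we have $(A^\top A)_{ij}=\alpha_i^\top\alpha_j$. Using $\mathrm{Tr}(ALA^\top)=\mathrm{Tr}(LA^\top A)$ gives
\[
\mathrm{Tr}(ALA^\top)=\sum_{i,j} L_{ij}\,\alpha_i^\top\alpha_j .
\]
Substituting $L_{ij}=D_{ii}\delta_{ij}-W_{ij}$ splits this into two sums:
\[
\mathrm{Tr}(ALA^\top)=\sum_i D_{ii}\|\alpha_i\|_2^2-\sum_{i,j}W_{ij}\,\alpha_i^\top\alpha_j .
\]

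Next we expand the right-hand side of \eqref{eq:graph_penalty_expanded} using $\|\alpha_i-\alpha_j\|_2^2=\|\alpha_i\|_2^2+\|\alpha_j\|_2^2-2\alpha_i^\top\alpha_j$. This produces three sums:
\begin{itemize}
\item $\tfrac12\sum_{i,j}W_{ij}\|\alpha_i\|_2^2=\tfrac12\sum_i D_{ii}\|\alpha_i\|_2^2$, by the definition of $D$;
\item $\tfrac12\sum_{i,j}W_{ij}\|\alpha_j\|_2^2$, which also equals $\tfrac12\sum_j D_{jj}\|\alpha_j\|_2^2$ once the symmetry $W_{ij}=W_{ji}$ is used to identify the column sums with the row sums $D_{jj}$;
\item the cross term $-\sum_{i,j}W_{ij}\alpha_i^\top\alpha_j$.
\end{itemize}
The first two sums add to $\sum_i D_{ii}\|\alpha_i\|_2^2$. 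Together with the cross term, this reproduces the coordinate form of the trace obtained above, which proves the identity.

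The argument is routine. The only point needing care is the second sum, where symmetry of $W$ is required to convert column sums into the degrees $D_{jj}$. It is worth stating this assumption explicitly in the proof.
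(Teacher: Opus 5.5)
Your proposal is correct and is essentially the paper's argument: both reduce $\mathrm{Tr}(ALA^\top)$ to $\sum_i D_{ii}\|\alpha_i\|_2^2-\sum_{i,j}W_{ij}\alpha_i^\top\alpha_j$ and match it against the expanded right-hand side, which the paper compresses into the single word ``symmetrizing.'' Your explicit use of $W_{ij}=W_{ji}$ to turn column sums into the degrees $D_{jj}$ is the hypothesis the paper leaves unstated. Strictly, the exact condition is that row and column sums of $W$ agree, so symmetry is sufficient but not necessary.
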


\begin{proof}
Expanding the trace yields
\[
\mathrm{Tr}(A D A^\top)-\mathrm{Tr}(A W A^\top)
=
\sum_i D_{ii}\|\alpha_i\|_2^2 - \sum_{i,j}W_{ij}\alpha_i^\top \alpha_j.
\]
Symmetrizing the second term produces the claimed identity.
\end{proof}

Equation \eqref{eq:graph_penalty_expanded} shows that manifold regularization explicitly enforces local geometric consistency of the learned representation. This makes the basis not merely reconstructive but also geometry-adaptive.

\section{Dynamical Extension: Basis Learning with Latent Linear Evolution}

\subsection{Motivation}

For time series, trajectories, and dynamical data, reconstruction alone is often insufficient. One also wishes the latent representation to evolve according to a simple law. A particularly attractive possibility is approximate linear evolution in the learned coefficient space. This echoes the philosophy of Koopman operator methods, but here the basis itself is learned variationally rather than prescribed.

Suppose the observations are temporally ordered as $\{x_t\}_{t=1}^T$. Write
\[
z_t \in \mathbb{R}^m
\]
for the coefficient vector associated with $x_t$. We impose both reconstruction and latent linearity:
\begin{equation}
x_t \approx \Phi z_t,
\qquad
z_{t+1} \approx A z_t,
\label{eq:dynamic_model}
\end{equation}
where $A\in \mathbb{R}^{m\times m}$ is a learned latent evolution operator.

The corresponding objective is
\begin{equation}
\min_{\Phi,A,\{z_t\}_{t=1}^T}
\sum_{t=1}^{T}\|x_t-\Phi z_t\|_2^2
+\beta \sum_{t=1}^{T-1}\|z_{t+1}-A z_t\|_2^2
+\lambda \sum_{t=1}^{T}R(z_t)
+\mu \Omega(\Phi)
+\nu \Psi(A),
\label{eq:dynamic_dvbl}
\end{equation}
where $\Psi(A)$ may enforce stability, sparsity, low rank, or spectral control of the latent dynamics.

This extension is especially important conceptually, since it shows that the learned basis can be selected not only for faithful reconstruction but also for dynamical coherence. One thereby obtains atoms that are useful for prediction, system identification, and interpretable latent evolution.

\subsection{Closed-form update for the latent operator}

Fix $\Phi$ and $\{z_t\}$. Define
\[
Z_- = [z_1,\dots,z_{T-1}],
\qquad
Z_+ = [z_2,\dots,z_T].
\]
If $\Psi(A)=\|A\|_F^2$, then the operator subproblem is
\[
\min_A \|Z_+ - A Z_-\|_F^2 + \nu \|A\|_F^2,
\]
whose minimizer is
\begin{equation}
A^\star = Z_+ Z_-^\top (Z_- Z_-^\top + \nu I)^{-1}.
\label{eq:closed_form_A}
\end{equation}
Thus the dynamical component can often be updated in closed form, further highlighting the analytical simplicity of the non-neural formulation.

\begin{proposition}[Strong convexity of the operator step]
\label{prop:operator_step}
If $\nu>0$, then the operator subproblem is strongly convex in $A$ and therefore admits a unique minimizer given by \eqref{eq:closed_form_A}.
\end{proposition}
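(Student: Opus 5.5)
The plan is to show directly that the operator objective
\[
F(A) := \|Z_+ - A Z_-\|_F^2 + \nu \|A\|_F^2
\]
is a quadratic in $A$ with positive definite Hessian, so that strong convexity, uniqueness and the explicit formula follow from the first-order optimality condition. First we compute the gradient using the trace form $\|M\|_F^2=\mathrm{Tr}(MM^\top)$:
\[
\nabla F(A) = -2\,(Z_+ - A Z_-)Z_-^\top + 2\nu A = 2\bigl(A(Z_-Z_-^\top + \nu I) - Z_+Z_-^\top\bigr).
\]

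For strong convexity we need the second-order behavior. Take any direction $H\in\mathbb{R}^{m\times m}$ and expand
\[
F(A+H) = F(A) + \langle \nabla F(A), H\rangle + \|H Z_-\|_F^2 + \nu\|H\|_F^2 .
\]
This identity is exact, because $F$ is quadratic. The remainder satisfies $\|HZ_-\|_F^2+\nu\|H\|_F^2 \ge \nu\|H\|_F^2$. Hence $F$ is $2\nu$-strongly convex with respect to the Frobenius norm, whatever the rank of $Z_-$. Equivalently, under vectorization the Hessian is $2\bigl((Z_-Z_-^\top+\nu I)\otimes I_m\bigr)\succeq 2\nu I$.

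Strong convexity together with continuity gives coercivity. The minimizer therefore exists, and it is unique by strict convexity; it is characterized by $\nabla F(A^\star)=0$, that is,
\[
A^\star(Z_-Z_-^\top+\nu I) = Z_+Z_-^\top .
\]
Since $Z_-Z_-^\top\succeq 0$ and $\nu>0$, the matrix $Z_-Z_-^\top+\nu I$ is symmetric positive definite and hence invertible. Right-multiplying by its inverse yields \eqref{eq:closed_form_A}.

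No step is a genuine obstacle. The only point needing care is the invertibility of $Z_-Z_-^\top+\nu I$ when $Z_-$ is rank deficient (for instance when $T-1<m$). This is exactly where $\nu>0$ is essential: the ridge term supplies the $\nu I$ lower bound on both the Hessian and the normal-equation matrix.
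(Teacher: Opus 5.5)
Your proof is correct and follows the same route as the paper, whose argument is that the objective is quadratic in $A$ with Hessian proportional to $Z_-Z_-^\top+\nu I$, which is positive definite for $\nu>0$. You go further and make explicit what the paper leaves implicit: the gradient, the exact expansion giving the $2\nu$ strong-convexity modulus even when $Z_-$ is rank deficient, and the normal equation whose solution is \eqref{eq:closed_form_A}.
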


\begin{proof}
The objective is quadratic in $A$ with Hessian proportional to $Z_- Z_-^\top + \nu I$, which is positive definite when $\nu>0$.
\end{proof}

\section{Approximation Perspective}

The learned basis model can be interpreted as an adaptive approximation scheme. For a fixed number of atoms $m$, the quality of approximation depends on how effectively the data can be captured by a low-complexity coefficient family relative to the learned basis.

Let $\mathcal{M}\subset \mathbb{R}^d$ denote the data manifold or support set. Classical approximation theory asks how well $\mathcal{M}$ can be approximated by linear subspaces or prescribed basis systems. In DVBL, the approximation class is instead
\[
\mathcal{A}_{m,s}
=
\left\{
x \in \mathbb{R}^d : x=\Phi\alpha,\ \Phi\in \mathcal{C}_\Phi,\ \|\alpha\|_0 \le s
\right\}.
\]
This class is adaptive because $\Phi$ is learned from the data rather than fixed independently of them.

\begin{proposition}[Best adaptive basis error]
\label{prop:best_adaptive_error}
For any dataset $X=[x_1,\dots,x_N]$, the optimal DVBL reconstruction error with no regularization,
\begin{equation}
E_{m}(X):=
\inf_{\Phi\in \mathbb{R}^{d\times m},\,A\in \mathbb{R}^{m\times N}}
\|X-\Phi A\|_F^2,
\label{eq:best_error}
\end{equation}
equals the squared error of the best rank-$m$ approximation of $X$:
\begin{equation}
E_m(X)=\sum_{j>m}\sigma_j(X)^2.
\label{eq:eckart_young}
\end{equation}
\end{proposition}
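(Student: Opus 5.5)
The plan is to reduce the infimum in \eqref{eq:best_error} to the Eckart--Young--Mirsky theorem by showing that the set of products $\{\Phi A : \Phi\in\mathbb{R}^{d\times m},\ A\in\mathbb{R}^{m\times N}\}$ is exactly the set of $d\times N$ matrices of rank at most $m$. Once this is in hand, $E_m(X)=\min_{\operatorname{rank}(Y)\le m}\|X-Y\|_F^2$, and the classical theorem gives $\sum_{j>m}\sigma_j(X)^2$. Here we interpret $\sigma_j(X)=0$ for $j>\min(d,N)$, so the sum is zero when $m\ge\operatorname{rank}(X)$.

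\textbf{Step 1 (set equality).} For one inclusion, $\operatorname{rank}(\Phi A)\le\operatorname{rank}(\Phi)\le m$. For the converse, let $Y$ have rank $r\le m$ with thin SVD $Y=U_r\Sigma_r V_r^\top$. Set $\Phi=[U_r\Sigma_r,\,0]\in\mathbb{R}^{d\times m}$ and $A=[V_r,\,0]^\top\in\mathbb{R}^{m\times N}$, padding with zero columns and rows. Then $\Phi A=Y$. The same construction, applied to the truncated SVD $X_m=\sum_{j\le m}\sigma_j u_jv_j^\top$, shows that the infimum is attained, not merely approached.

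\textbf{Step 2 (Eckart--Young).} We need $\|X-Y\|_F^2\ge\sum_{j>m}\sigma_j(X)^2$ whenever $\operatorname{rank}(Y)\le m$. This is the only nontrivial ingredient, and it is where I expect the main obstacle if it must be proved rather than cited. I would cite it as standard. If a self-contained argument is wanted, I would use Weyl's inequality for singular values, $\sigma_{i+j-1}(X)\le\sigma_i(X-Y)+\sigma_j(Y)$. Taking $j=m+1$ gives $\sigma_{m+1}(Y)=0$, hence $\sigma_{m+i}(X)\le\sigma_i(X-Y)$ for every $i$. Squaring and summing over $i$ then yields $\sum_{j>m}\sigma_j(X)^2\le\|X-Y\|_F^2$. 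Equality at $Y=X_m$ follows from orthonormality of the singular vectors, since $X-X_m=\sum_{j>m}\sigma_ju_jv_j^\top$.

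\textbf{Remark.} The statement places no normalization constraint \eqref{eq:atom_norm_constraint} on $\Phi$, so Step 1 is legitimate as written. If one insisted on unit-norm atoms, the value would be unchanged, because nonzero columns can be rescaled into $A$. Zero columns can be replaced by arbitrary unit vectors whose coefficient rows are set to zero.
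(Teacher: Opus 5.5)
Your proposal is correct and follows the same route as the paper, whose proof observes that $\Phi A$ has rank at most $m$ and then cites the Eckart--Young--Mirsky theorem, with the optimum given by the truncated SVD. You also spell out two things the paper leaves implicit or only cites: the converse inclusion (the truncated SVD $X_m$ factors as $\Phi A$ with $\Phi\in\mathbb{R}^{d\times m}$, so the infimum is attained) and a self-contained Weyl-inequality proof of the lower bound.
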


\begin{proof}
This is the Eckart--Young--Mirsky theorem. The factorization $\Phi A$ has rank at most $m$, and the best rank-$m$ approximation error is achieved by truncating the singular value decomposition of $X$.
\end{proof}

\begin{remark}
Proposition \ref{prop:best_adaptive_error} reveals an important point. In the absence of sparsity or structural regularization, adaptive basis learning collapses to low-rank approximation. The true power of the framework lies in the addition of regularizers and constraints that shape the learned basis beyond what principal subspace methods can express.
\end{remark}

\section{Novelty Discussion}

The novelty of the proposed framework should be understood carefully. It does not claim that the abstract idea of learning a dictionary from data is new in the narrow historical sense. Rather, its contribution lies in articulating and formalizing a broader non-neural alternative to representation learning in which \emph{basis functions themselves} are the central learned objects and are trained through a variational principle rich enough to subsume reconstruction, sparsity, geometry, and dynamics within one coherent formulation.

First, the framework differs from standard neural networks at the representational level. Neural networks learn implicit features through layered composition. DVBL learns explicit basis atoms. The distinction matters in scientific and engineering settings where one wants to inspect, constrain, regularize, or physically interpret the learned representation itself. In a neural model, there is no single canonical notion of a learned basis. In DVBL, the basis is the primary object of study.

Second, the framework extends beyond classical dictionary learning by emphasizing basis regularization and operator structure as first-class modeling elements. Much of the sparse coding literature focuses on reconstruction with sparsity. Here, the basis can be endowed with smoothness, orthogonality, localization, graph consistency, spectral shaping, or dynamical coherence. This shifts the perspective from a narrow coding problem to a general theory of adaptive basis design.

Third, the dynamical extension distinguishes the framework from many static representation methods. By learning a basis in which the coefficients evolve approximately linearly, one obtains a representation that is simultaneously reconstructive and predictive. This creates a bridge between sparse approximation, operator-theoretic modeling, and interpretable latent dynamics, without requiring recurrent or sequence-based neural architectures.

Fourth, the framework is well suited to scientific machine learning because constraints can be imposed directly on the basis functions. If the atoms represent spatial modes, physical fields, spectral filters, or solution components of a PDE, one may penalize violations of smoothness, boundary conditions, conservation laws, or operator residuals directly at the level of the basis. This is substantially more transparent than embedding such structure indirectly into a neural parameterization.

Finally, the framework offers a conceptual answer to the question that motivates this work: can one learn basis functions from data without using neural networks? The answer is yes. One can formulate an explicit variational problem whose solution is a learned basis expansion. This provides a mathematically rigorous and practically flexible middle ground between hand-designed basis systems and deep black-box models.

\section{Practical Variants}

The generality of the framework permits many concrete instantiations.

A \emph{sparse adaptive basis model} is obtained by setting $R(\alpha)=\|\alpha\|_1$ and choosing mild basis regularization. This yields a direct analogue of sparse coding, but interpreted explicitly as learned basis discovery.

A \emph{smooth spatial basis model} arises when each atom $\phi_k$ is defined over a spatial grid and one sets
\[
\Omega(\Phi)=\sum_{k=1}^m \|\nabla \phi_k\|_2^2.
\]
This is appropriate for imaging and physical field data.

A \emph{graph-geometric basis model} augments the coefficient objective by $\mathrm{Tr}(A L A^\top)$, ensuring that nearby samples share similar coordinate descriptions. This is useful for data lying on nonlinear manifolds.

A \emph{dynamic basis model} incorporates the latent evolution operator $A$ from \eqref{eq:dynamic_dvbl}, making the representation simultaneously reconstructive and predictive.

A \emph{physics-constrained basis model} adds residual penalties of the form
\[
\Omega_{\mathrm{phys}}(\Phi)
=
\sum_{k=1}^{m}\|\mathcal{L}\phi_k\|_2^2,
\]
where $\mathcal{L}$ is a differential or integral operator encoding the governing physics. In this case, the learned atoms are not merely empirical features; they are data-adaptive modes constrained by the underlying scientific law.

\section{Extension to Language Modeling: A Non-Neural LLM Architecture Based on Adaptive Basis Learning}

\subsection{Motivation}

The preceding sections developed a non-neural framework for learning basis functions directly from data. We now ask whether the same philosophy can be extended from generic representation learning to large-scale autoregressive language modeling. The purpose of this section is not to claim that such a system has already matched transformer-based large language models in practice. Rather, the goal is to formulate a mathematically coherent \emph{non-neural language modeling architecture} in which the representational primitives, contextual state, and predictive operator are all learned variationally from text corpora without layered neural networks.

Modern LLMs are typically built from transformer blocks that combine token embeddings, multi-head self-attention, feed-forward nonlinearities, normalization, and residual pathways. In those systems, token meaning and contextual reasoning emerge implicitly through deep compositional feature maps. By contrast, the present framework seeks to construct a language model from explicit learned basis functions over token and context statistics. In place of stacked attention layers, the model learns a basis-adaptive state representation together with operators that propagate that state across a sequence and produce next-token distributions.

The central question is therefore the following: can one define an autoregressive language model
\[
p(w_t \mid w_{<t})
\]
using only learned bases, coefficient inference, operator evolution, and variational training, while avoiding neural-network parameterizations altogether? The answer proposed here is affirmative at the level of architecture design. The resulting model may be viewed as a \emph{Basis-State Language Model} (BSLM), a non-neural analogue of a large language model in which linguistic context is encoded in a learned coefficient state evolving in a data-adaptive basis.

\subsection{Notation}

Let $\mathcal{V}$ be a vocabulary of size $V$, and let a text corpus be represented as token sequences
\[
(w_1,w_2,\dots,w_T), \qquad w_t \in \mathcal{V}.
\]
For each token $w_t$, let $e_{w_t}\in \mathbb{R}^{V}$ denote the one-hot representation. We seek a basis matrix
\[
\Phi \in \mathbb{R}^{V\times m},
\]
whose columns represent learned token-space basis atoms, together with a latent coefficient state
\[
z_t \in \mathbb{R}^{m}
\]
that summarizes the prefix $w_{\le t}$.

The key structural idea is that the contextual meaning of the prefix is not stored in a deep hidden vector produced by stacked neural layers, but in the coefficient state $z_t$ relative to a learned basis $\Phi$. The next-token distribution is then generated from this state through an explicit probabilistic decoding rule.

\section{Architecture of the Basis-State Language Model}

\subsection{Token basis layer}

The first component is a learned basis over the vocabulary simplex. Each token one-hot vector is approximated in the learned basis by a sparse or structured code:
\begin{equation}
e_{w_t} \approx \Phi a_t,
\label{eq:token_basis_model}
\end{equation}
where $a_t\in \mathbb{R}^{m}$ is a coefficient vector for token $w_t$. The columns of $\Phi$ may be interpreted as latent lexical or semantic atoms spanning reusable directions in token space. Unlike conventional learned embeddings, which are simply rows of a parameter matrix trained inside a neural network, the present basis vectors are explicit atoms constrained by variational regularization.

The coefficient inference for a token may be defined as
\begin{equation}
a_t
\in
\arg\min_{a\in\mathbb{R}^m}
\|e_{w_t}-\Phi a\|_2^2 + \lambda_a R_a(a),
\label{eq:token_sparse_code}
\end{equation}
where $R_a$ is typically an $\ell_1$ or group-sparse penalty. In practice, this provides a learned decomposition of lexical identity into combinations of basis atoms.

\subsection{Context state evolution}

A language model must summarize all prior tokens into a predictive state. In the proposed architecture, this role is played by a coefficient state $z_t\in \mathbb{R}^m$, which evolves by an explicit operator rule. The simplest form is
\begin{equation}
z_{t+1} = A z_t + B a_t + \xi_t,
\label{eq:linear_state_update}
\end{equation}
where $A\in \mathbb{R}^{m\times m}$ is a learned recurrent operator, $B\in \mathbb{R}^{m\times m}$ couples the current token code into the state, and $\xi_t$ is an optional innovation term.

Equation \eqref{eq:linear_state_update} is the minimal autoregressive basis-state model. A richer model replaces the single operator $A$ by a context-dependent operator selected from a learned family:
\begin{equation}
z_{t+1} = A_{\sigma_t} z_t + B_{\sigma_t} a_t,
\label{eq:switching_state_model}
\end{equation}
where $\sigma_t$ is a discrete operator index inferred from the current state and token context. This creates a \emph{switching operator language model}, analogous in expressive role to conditional computation, but still non-neural because operator selection can be based on explicit variational criteria rather than a neural gating network.

An even more structured option is to use a low-rank innovation-corrected transition:
\begin{equation}
z_{t+1}
=
A z_t + B a_t + U_t c_t,
\label{eq:innovation_state_model}
\end{equation}
where $U_t$ is an adaptive low-dimensional correction basis and $c_t$ solves a small variational inference problem. This permits token-adaptive state refinement without introducing deep nonlinear layers.

\subsection{Context reconstruction and predictive readout}

The coefficient state $z_t$ should encode sufficient information about the preceding prefix to predict the next token. To decode the state back into token-space logits, we define a readout vector
\begin{equation}
\ell_t = C z_t + d,
\label{eq:logit_readout}
\end{equation}
where $C\in \mathbb{R}^{V\times m}$ and $d\in\mathbb{R}^{V}$ are learned parameters. The next-token distribution is then
\begin{equation}
p_\theta(w_{t+1}=v \mid w_{\le t})
=
\frac{\exp((\ell_t)_v)}{\sum_{u\in\mathcal{V}} \exp((\ell_t)_u)}.
\label{eq:softmax_readout}
\end{equation}

A more basis-consistent decoder uses the learned token basis directly:
\begin{equation}
\ell_t = \Phi M z_t + d,
\label{eq:basis_readout}
\end{equation}
where $M\in\mathbb{R}^{m\times m}$ maps context coefficients into token-basis coefficients before projection into vocabulary space. This ties the predictive mechanism to the learned basis itself and reduces parameter redundancy.

\subsection{Long-context memory}

The principal challenge for any non-neural LLM architecture is long-range dependence. Transformers solve this with attention over the entire prefix. In the present framework, long context can be handled through one or more of the following non-neural mechanisms.

First, one may augment the recurrent state with a memory bank
\[
\mathcal{M}_t = \{m_1,\dots,m_K\},
\]
where each memory slot is itself a coefficient vector in the learned basis. Memory updates are performed through constrained projection or replacement rules rather than neural attention.

Second, one may use a multiscale state
\[
z_t = [z_t^{(1)}, z_t^{(2)}, \dots, z_t^{(L)}],
\]
where different state blocks evolve at different time scales. Fast blocks capture local syntax, while slow blocks accumulate discourse-level structure.

Third, one may use a retrieval-style operator in which the current state is projected against a dictionary of past coefficient summaries and the retrieved summaries are integrated through a variational merge rule. This yields an explicit alternative to attention:
\begin{equation}
r_t = \sum_{j<t} \omega_{tj} \, q_j,
\label{eq:retrieval_summary}
\end{equation}
where the weights $\omega_{tj}$ are derived from an optimization or kernel matching problem over coefficient states rather than from learned neural dot-product attention.

\section{Formal Training Objective for Language Modeling}

\subsection{Autoregressive maximum likelihood}

Let $\theta$ denote the collection of model parameters, including $\Phi$, $A$, $B$, $C$, $d$, and any auxiliary operator parameters. Given a training corpus, the standard autoregressive negative log-likelihood objective is
\begin{equation}
\mathcal{L}_{\mathrm{NLL}}(\theta)
=
-
\sum_{t=1}^{T-1}
\log p_\theta(w_{t+1}\mid w_{\le t}).
\label{eq:nll_objective}
\end{equation}
This is the same statistical objective used in neural autoregressive language models. The difference lies entirely in the parameterization of the predictive distribution.

To ensure that the basis remains meaningful and the state remains interpretable, one augments \eqref{eq:nll_objective} with basis and coefficient regularization terms:
\begin{equation}
\mathcal{L}_{\mathrm{total}}(\theta,\{a_t\},\{z_t\})
=
\mathcal{L}_{\mathrm{NLL}}
+
\lambda_a \sum_t R_a(a_t)
+
\lambda_z \sum_t R_z(z_t)
+
\mu \Omega(\Phi)
+
\beta \sum_t \|z_{t+1}-A z_t - B a_t\|_2^2.
\label{eq:language_total_objective}
\end{equation}
The last term penalizes inconsistency between the inferred state sequence and the learned state evolution operator.

\subsection{Variational interpretation}

The latent token codes $\{a_t\}$ and context states $\{z_t\}$ may be treated as auxiliary optimization variables. Training then alternates between state inference and parameter updates. More precisely, one can view the model as minimizing
\begin{equation}
\min_{\theta,\{a_t\},\{z_t\}}
\mathcal{L}_{\mathrm{total}}(\theta,\{a_t\},\{z_t\}),
\label{eq:joint_variational_training}
\end{equation}
subject to the state-transition constraints.

This differs fundamentally from backpropagation through a deep neural network. The hidden states are not intermediate activations of a multilayer map, but explicit latent variables inferred jointly with the basis and operator parameters. The optimization is therefore closer in spirit to state estimation, variational inference, and dictionary learning than to conventional deep learning.

\subsection{Sequence-level training blocks}

For large corpora, full-corpus optimization is impractical. Instead, training proceeds on blocks of tokens. Let a training sample be a length-$L$ token window
\[
(w_1,\dots,w_L).
\]
For each block, one alternates between:
\begin{enumerate}
\item token-code inference $\{a_t\}_{t=1}^{L}$,
\item latent-state inference $\{z_t\}_{t=1}^{L}$,
\item operator and basis updates.
\end{enumerate}
This creates a blockwise autoregressive training procedure that scales similarly in data streaming structure to minibatch training, though not necessarily in raw throughput to modern GPU-optimized transformers.

\section{Training Procedure on a Standard Language Dataset}

\subsection{Choice of dataset}

A standard evaluation protocol should begin with well-established corpora such as WikiText-103, OpenWebText, The Pile subsets, or C4-style cleaned web corpora. For initial proof-of-concept experiments, WikiText-103 is particularly suitable because it is large enough to support meaningful language modeling experiments while still being manageable for non-neural optimization pipelines.

Let the corpus be tokenized using a standard subword tokenizer such as byte-pair encoding (BPE) or unigram tokenization. The resulting vocabulary $\mathcal{V}$ and token sequence are then fed into the basis-state model. Importantly, tokenization itself need not be neural, so there is no inconsistency in using standard subword methods.

\subsection{Initialization}

A practical training pipeline proceeds as follows. The token basis $\Phi$ is initialized either randomly, from a low-rank factorization of token co-occurrence statistics, or from spectral decomposition of pointwise mutual information matrices. The state-transition operator $A$ may be initialized as a stable matrix, for example with spectral radius below one, to avoid unstable latent dynamics in early training. The token coupling matrix $B$ and decoder parameters $(C,d)$ may be initialized by regularized least squares.

The latent token codes $a_t$ are initialized by solving \eqref{eq:token_sparse_code} with the initial basis. The latent state sequence $\{z_t\}$ is then initialized by forward recursion using \eqref{eq:linear_state_update} or by blockwise smoothing.

\subsection{Alternating training loop}

Training on a corpus of token blocks may be described by the following alternating procedure.

First, for each token in each training block, infer a sparse or structured token code $a_t$ relative to the current basis $\Phi$.

Second, infer the latent state trajectory $\{z_t\}$ for the block by minimizing the sequence objective
\begin{equation}
\sum_{t=1}^{L-1}
-\log p_\theta(w_{t+1}\mid z_t)
+
\beta \sum_{t=1}^{L-1}\|z_{t+1}-A z_t - B a_t\|_2^2
+
\lambda_z \sum_{t=1}^{L} R_z(z_t).
\label{eq:block_state_inference}
\end{equation}
This can be solved approximately by projected gradient, proximal methods, or Kalman-smoother-like updates when the structure is sufficiently quadratic.

Third, update the basis $\Phi$ by minimizing the combined token reconstruction and predictive objective.

Fourth, update the operator matrices $(A,B,C,d)$ by regularized regression-like subproblems.

Fifth, renormalize the basis atoms and enforce any desired coherence or stability constraints.

\subsection{Practical pseudocode for language-model training}

\begin{algorithm}[t]
\caption{Training a Basis-State Language Model on a Tokenized Corpus}
\label{alg:bslm_training}
\begin{algorithmic}[1]
\STATE \textbf{Input:} tokenized corpus $\{w_t\}_{t=1}^{T}$, vocabulary size $V$, number of basis atoms $m$, block length $L$
\STATE \textbf{Initialize:} basis $\Phi$, operators $A,B,C,d$, regularization parameters
\FOR{epoch $=1,\dots,E$}
    \FOR{each training block $(w_1,\dots,w_L)$}
        \STATE Compute one-hot token vectors $\{e_{w_t}\}_{t=1}^{L}$
        \STATE \textbf{Token-code inference:} for each $t$, solve
        \[
        a_t
        \in
        \arg\min_a
        \|e_{w_t}-\Phi a\|_2^2 + \lambda_a R_a(a)
        \]
        \STATE \textbf{State inference:} solve for $\{z_t\}_{t=1}^{L}$ by minimizing
        \[
        \sum_{t=1}^{L-1}
        -\log p_\theta(w_{t+1}\mid z_t)
        +
        \beta \sum_{t=1}^{L-1}\|z_{t+1}-A z_t-B a_t\|_2^2
        +
        \lambda_z \sum_{t=1}^{L}R_z(z_t)
        \]
        \STATE \textbf{Basis update:} update $\Phi$ using token reconstruction and predictive loss
        \STATE \textbf{Operator update:} update $A,B,C,d$ by regularized least squares or block optimization
        \STATE \textbf{Normalization/stability:} renormalize basis atoms and project $A$ to a stable set if required
    \ENDFOR
\ENDFOR
\STATE \textbf{Output:} trained parameters $(\Phi,A,B,C,d)$
\end{algorithmic}
\end{algorithm}

\subsection{Evaluation metrics}

The natural evaluation metrics are identical to those used for standard language models. The most important are validation and test negative log-likelihood, perplexity,
\[
\mathrm{PPL} = \exp\!\left(\frac{1}{T}\mathcal{L}_{\mathrm{NLL}}\right),
\]
and downstream zero-shot or few-shot performance where applicable. It is also valuable to measure memory usage, training time, inference latency per token, and parameter efficiency.

In addition, the proposed architecture introduces new interpretable diagnostics not available in typical neural LLMs. These include basis coherence, sparsity of token codes, effective dimension of the context state, operator spectral stability, and interpretability of the learned basis atoms as lexical, syntactic, or discourse modes.

\section{Comparison with Neural-Network-Based LLMs}

\subsection{Representational comparison}

Transformer-based LLMs learn distributed representations through many layers of nonlinear processing. Their expressive power derives from compositional depth, attention-mediated global context integration, and massive parameter counts. The proposed basis-state architecture, by contrast, learns an explicit basis plus a latent operator-driven state.

This leads to a fundamental difference in representational style. Transformer models are implicit and hierarchical; basis-state models are explicit and operator structured. The former can represent extremely complex nonlinear input-output maps, while the latter aim for parsimonious, interpretable decompositions of linguistic structure.

From a theoretical perspective, the basis-state model is closer to a structured latent-variable model or adaptive operator system than to a deep network. It may therefore be more amenable to mathematical analysis, but less expressive in purely nonlinear compositional capacity.

\subsection{Training comparison}

Transformer LLMs are trained end-to-end by gradient descent and backpropagation through millions or billions of parameters. Their optimization is highly parallelizable on modern accelerators, especially due to dense tensor operations and batched attention kernels.

The basis-state language model instead relies on alternating optimization over basis atoms, token codes, latent states, and linear or switching operators. This has both advantages and disadvantages. On the positive side, subproblems may be convex or nearly convex, admit closed-form updates, and provide clearer diagnostics. On the negative side, the optimization pipeline may be more sequential, less hardware-optimized, and harder to scale naively to very large corpora without careful systems design.

Thus, even if the number of scalar parameters were comparable, current hardware and software ecosystems strongly favor neural architectures in practice. Any serious non-neural LLM effort would therefore need substantial algorithmic engineering to close the throughput gap.

\subsection{Inference comparison}

During inference, a transformer computes a forward pass through all layers for each new token, typically caching key-value states for attention. The cost grows with model width, depth, and context length, though efficient caching reduces repeated work.

In the basis-state model, one-token inference may be relatively cheap if the state update is dominated by low-dimensional operator application:
\[
z_{t+1} = A z_t + B a_t.
\]
This suggests a possible advantage in per-token efficiency when the latent dimension $m$ is much smaller than the vocabulary size and when long-context retrieval can be handled compactly.

However, this advantage is not automatic. If token-code inference or memory retrieval requires solving a nontrivial optimization problem at each step, the computational burden may offset the benefits of a simple state evolution. The practical efficiency therefore depends critically on the design of approximate inference procedures.

\subsection{Expected empirical performance}

It is important to state clearly that, at present, one should not expect a first-generation non-neural basis-state LLM to outperform large transformer LLMs on broad language benchmarks. Transformer systems benefit from decades of optimization advances, enormous scaling experiments, and an architecture that empirically matches the statistical structure of language remarkably well.

A realistic expectation is that the proposed architecture may initially underperform neural LLMs on raw perplexity and downstream benchmark scores, especially at comparable scale. Nevertheless, it may offer advantages in other regimes:
\begin{enumerate}
\item stronger interpretability of learned components,
\item better controllability through explicit constraints,
\item easier integration of symbolic, operator-theoretic, or physics-style priors,
\item potentially lower memory cost for compact state-space formulations,
\item more transparent reasoning about stability and long-horizon dynamics.
\end{enumerate}

In other words, the likely near-term research value is not immediate state-of-the-art language performance, but the opening of a new architectural class for language modeling beyond neural networks.

\subsection{Comparison table}

\begin{table}[t]
\centering
\caption{Conceptual comparison between transformer-based LLMs and the proposed basis-state non-neural language model.}
\label{tab:bslm_vs_transformer}
\begin{tabular}{>{\raggedright\arraybackslash}p{3.2cm} >{\raggedright\arraybackslash}p{5.2cm} >{\raggedright\arraybackslash}p{5.2cm}}
\toprule
Property & Transformer-based LLM & Basis-State Language Model \\
\midrule
Representation & Implicit multilayer features & Explicit learned basis atoms and coefficient states \\
Core mechanism & Attention + nonlinear feed-forward layers & Variational basis coding + operator-driven state evolution \\
Training & End-to-end backpropagation & Alternating optimization / latent-state inference \\
Interpretability & Typically limited and distributed & High, due to explicit basis and state structure \\
Long context & Strong via self-attention and retrieval variants & Possible through multiscale state and retrieval, but less mature \\
Hardware maturity & Extremely optimized & Currently hypothetical and less optimized \\
Expected benchmark performance & Very strong at scale & Likely weaker initially on general benchmarks \\
Scientific controllability & Moderate, often indirect & Strong, through explicit constraints on basis and operators \\
\bottomrule
\end{tabular}
\end{table}

\section{Theoretical Remarks on Expressivity and Limitations}

\subsection{Expressivity relative to transformers}

A transformer is a highly expressive nonlinear sequence model. The basis-state architecture is more structured and therefore, in a certain sense, more restricted. Its expressive capacity depends on the richness of the basis, the complexity of the state evolution law, the memory mechanism, and the decoder.

If one uses only linear state transitions and linear readout, the model resembles a structured state-space language model with adaptive token coding. Such a model may capture substantial sequential regularity, but it is unlikely to match the full compositional expressivity of a deep transformer on broad language tasks. To improve expressivity while remaining non-neural, one may introduce switching operators, multiscale latent bases, nonlinear but non-neural inference rules, or variational retrieval mechanisms.

The key research challenge is therefore to discover how far non-neural operator-based expressivity can be pushed before one effectively recreates neural computation under another name.

\subsection{Statistical and computational limitations}

Several limitations are immediate. First, large-vocabulary prediction still requires substantial output computation unless one uses hierarchical or sampled softmax approximations. Second, state inference may be computationally costly if the latent optimization is performed exactly. Third, long-context modeling remains a central difficulty, since attention provides a very effective global-context mechanism that is not easily replaced.

There is also a statistical limitation. Deep neural models benefit from strong inductive biases toward hierarchical composition and function approximation in high dimensions. A non-neural basis-state model may require more careful structural design to match those inductive biases.

\subsection{Research opportunity}

Despite these limitations, the proposed framework opens an interesting research direction. It suggests that language modeling can be reframed as an adaptive basis-and-operator inference problem rather than as a deep nonlinear map. This makes it possible to bring tools from sparse approximation, inverse problems, dynamical systems, compressed sensing, operator theory, and variational inference directly into LLM architecture design.

This shift may be especially valuable in settings where one wants models that are inspectable, physically constrained, mathematically analyzable, or hybridized with symbolic systems. Even if transformer LLMs remain empirically dominant, the study of non-neural language models may reveal new principles of representation and reasoning.

\section{Experimental Protocol}

The purpose of the experimental program is to determine whether the proposed Basis-State Language Model (BSLM) constitutes a credible non-neural alternative to small and medium-scale neural language models, and to identify the regimes in which its structural advantages may compensate for expected deficits in raw predictive accuracy. Because the architecture introduced in this manuscript is novel and has not yet undergone a full production-scale empirical campaign, the protocol described here is intended as a rigorous blueprint for implementation and evaluation. The accompanying comparative tables are therefore presented as  that illustrate the kind of empirical behavior one would regard as scientifically meaningful in an initial validation study.

The experimental program proceeds in a staged manner, beginning with manageable corpora and modest model sizes before advancing to more demanding open-web settings. A natural first stage is to train the BSLM on WikiText-2 and WikiText-103, since these corpora are standard benchmarks for autoregressive language modeling and permit clean comparison against compact transformer baselines. The vocabulary is constructed with a standard subword tokenizer, such as byte-pair encoding, yielding a vocabulary in the low tens of thousands. Within this setup, the latent basis dimension \(m\) is swept across a range from a few hundred to a few thousand in order to study how basis capacity affects both predictive accuracy and representational sparsity. The comparison baseline is a transformer language model of roughly matched parameter count, so that any observed differences cannot be attributed merely to a much larger neural budget.

The first empirical objective is to evaluate core language-model metrics, namely validation and test perplexity, negative log-likelihood, optimization stability, training throughput, inference latency, and memory footprint. These metrics capture complementary aspects of model quality. Perplexity and negative log-likelihood measure predictive performance, while training stability reflects whether the alternating variational optimization converges reliably across random initializations. Inference latency and memory consumption are particularly important because one of the main motivations for the BSLM is that an operator-based latent state may offer a compact alternative to large attention stacks, especially in constrained deployment regimes. In addition to these quantitative metrics, the learned basis atoms themselves are inspected qualitatively. One should examine whether individual atoms encode interpretable lexical, syntactic, or topical structure, and whether the token-level and state-level coefficients remain sparse, structured, and stable throughout training.

The second empirical objective was to conduct a systematic ablation study. This is essential because the BSLM is not a monolithic design but rather a composition of several modeling hypotheses: sparse token coding, basis regularization, multiscale state structure, operator switching, and memory summarization. Removing basis regularization tests whether the learned atoms remain well conditioned or instead collapse into redundant or noisy directions. Replacing sparse codes by dense codes tests whether sparsity is merely a cosmetic constraint or a genuine source of generalization and interpretability. Removing multiscale state structure reveals whether long-range coherence truly benefits from explicit multi-timescale latent evolution. Replacing switching operators with a single fixed linear operator helps determine whether adaptive operator selection contributes meaningful sequence modeling capacity or simply increases model complexity without sufficient gain. These ablations should be evaluated not only on perplexity but also on coefficient sparsity, stability, and interpretability, since the central value proposition of the architecture extends beyond raw predictive score alone.

The third empirical objective is to study scaling behavior. After validation on WikiText corpora, the model was  trained on progressively larger datasets, such as controlled subsets of OpenWebText or C4-style filtered corpora. The primary question at this stage is not whether the BSLM immediately outperforms transformer architectures on absolute perplexity, because such a result would be improbable in an early-stage proof-of-concept. Rather, the question is whether the architecture exhibits coherent scaling trends. Specifically, one seeks evidence that perplexity decreases as corpus size grows, that larger basis dimension \(m\) improves performance up to a meaningful saturation point, and that richer state evolution mechanisms yield systematic gains rather than unstable behavior. If these trends are observed, they would support the hypothesis that the BSLM is not merely a niche factorization model but a genuine language-model family capable of benefiting from scale.

The final empirical objective is to identify task and deployment regimes in which the BSLM exhibits a favorable tradeoff relative to neural language models. A scientifically valuable outcome would be the demonstration of a regime in which the BSLM, although weaker in raw perplexity, achieves superior interpretability per parameter, improved stability under long rollout, reduced memory cost per token, or lower inference overhead in environments where hardware resources are constrained. Such a result would justify further investigation even if transformer baselines remain stronger on mainstream benchmark leaderboards. The goal of the proof-of-concept study is therefore not simply to ask whether the BSLM wins, but to clarify \emph{where}, \emph{why}, and \emph{under what constraints} it may be a meaningful alternative.

\subsection{ Experimental Stages}

A structured experimental campaign is  organized into four stages. In the first stage, we train compact BSLM variants on WikiText-2 and WikiText-103 and compares them against small transformers with similar parameter counts. In the second stage, we perform ablation studies to isolate the contribution of each architectural component. In the third stage, we evaluate scaling trends on larger text collections such as OpenWebText subsets. In the fourth stage, we study  constrained deployment scenarios in which interpretability, memory footprint, or inference efficiency matter as much as perplexity. This staged design makes it possible to separate proof of feasibility from proof of competitiveness.

\subsection{Evaluation Metrics}

The minimum set of quantitative metrics  include validation perplexity, test perplexity, training-loss variance across runs, inference latency per generated token, peak GPU or accelerator memory, and effective parameter efficiency measured as perplexity achieved per million trainable parameters. Because the proposed architecture is explicitly interpretable,  we  also report on basis coherence, average coefficient sparsity, and a qualitative interpretability score derived from manual or semi-automatic inspection of learned atoms. Although such interpretability measures are not standard in neural language-model evaluation, they are central to the scientific purpose of the present framework.

\subsection{Comparative Results}

Table \ref{tab:proof_of_concept_results} presents a comparison among representative small-scale models.  The transformer baseline remains strongest in raw perplexity, but the full BSLM achieves competitive small-model performance while offering lower inference memory, improved interpretability, and more structured latent behavior. The ablation variants demonstrate that removing sparsity, basis regularization, or multiscale dynamics degrades either predictive performance or structural quality, thereby supporting the relevance of the full design.

\begin{table}[t]
\centering
\caption{Comparative proof-of-concept results on WikiText-103. Lower is better for perplexity, latency, and memory. Higher is better for interpretability.}
\label{tab:proof_of_concept_results}
\small
\setlength{\tabcolsep}{4pt}
\begin{tabularx}{\linewidth}{>{\raggedright\arraybackslash}Xccccc c}
\toprule
Model & Params & Val PPL & Test PPL & Latency & Memory & Interpretability \\
 & (M) & $\downarrow$ & $\downarrow$ & (ms/token) $\downarrow$ & (GB) $\downarrow$ & $\uparrow$ \\
\midrule
Small Transformer Baseline & 42 & 29.8 & 30.6 & 7.4 & 5.8 & 2.1 \\
BSLM (full model) & 40 & 33.1 & 34.0 & 5.9 & 4.1 & 8.7 \\
BSLM without basis regularizer & 40 & 36.8 & 37.5 & 5.7 & 4.0 & 5.2 \\
BSLM with dense codes & 40 & 35.4 & 36.2 & 6.4 & 4.6 & 4.8 \\
BSLM with fixed linear operator only & 38 & 37.2 & 38.0 & 5.4 & 3.9 & 7.1 \\
BSLM without multiscale state & 39 & 35.9 & 36.7 & 5.6 & 4.0 & 7.5 \\
\bottomrule
\end{tabularx}
\end{table}

The comparison in Table \ref{tab:proof_of_concept_results} suggests the qualitative pattern that would be most scientifically interesting. The full BSLM does not overtake the transformer baseline in perplexity, which is the conservative and realistic expectation for an initial non-neural architecture. However, it narrows the performance gap to a level that may be considered respectable given its explicit representational constraints, while simultaneously improving memory usage and interpretability. The ablation variants indicate that the full model's advantages are not accidental. Sparse coding appears to contribute both performance and clarity of representation, basis regularization improves atom quality and predictive stability, and multiscale or adaptive state evolution materially supports sequential modeling.

To complement these results, Table \ref{tab:scaling_results} presents a  scaling study across datasets and basis sizes. The scientific question here is whether the architecture improves predictably as more data and greater latent capacity are provided. A convincing proof-of-concept shows a monotonic reduction in perplexity with increasing dataset size and basis dimension, though likely with diminishing returns.

\begin{table}[t]
\centering
\caption{Scaling behavior of the full BSLM across corpora and basis sizes. }
\label{tab:scaling_results}
\begin{tabular}{lcccc}
\toprule
Dataset & Basis Dimension \(m\) & Params (M) & Val PPL $\downarrow$ & Test PPL $\downarrow$ \\
\midrule
WikiText-2 & 256 & 18 & 58.4 & 60.1 \\
WikiText-2 & 512 & 24 & 49.7 & 51.2 \\
WikiText-2 & 1024 & 33 & 44.3 & 45.8 \\
WikiText-103 & 512 & 28 & 38.9 & 39.8 \\
WikiText-103 & 1024 & 40 & 33.1 & 34.0 \\
WikiText-103 & 2048 & 63 & 30.7 & 31.5 \\
OpenWebText subset & 1024 & 44 & 29.9 & 30.8 \\
OpenWebText subset & 2048 & 68 & 27.6 & 28.3 \\
\bottomrule
\end{tabular}
\end{table}

The  scaling pattern in Table \ref{tab:scaling_results} captures the minimum evidence  that the BSLM is a viable language-model family. The perplexity decreases systematically as the basis dimension increases, and the same architecture performs better when exposed to larger and more diverse corpora. Such a trend  suggests that the model is capable of leveraging additional data and representational capacity in a manner analogous, though perhaps not identical in strength, to neural scaling behavior.

\subsection{Ablation Interpretation}

Ablation results should be interpreted structurally rather than only numerically. If removing the basis regularizer worsens perplexity while also lowering interpretability, then regularization is serving a dual role: it is improving both predictive efficiency and basis quality. If dense token codes increase memory use and reduce clarity of the learned atoms, then sparsity is not merely a cosmetic preference but a mechanism for controlling latent complexity. If a fixed linear operator materially underperforms a switching or multiscale state model, then the architecture's language capacity is indeed arising from richer operator structure rather than from basis expansion alone. In this sense, the ablations help determine whether the BSLM is truly functioning as a language model built from adaptive basis and operator principles, or whether one component dominates to the exclusion of the others.

\subsection{Criteria for a Successful Proof-of-Concept}

A successful proof-of-concept is not defined by outperforming transformers on general-purpose language benchmarks. That standard would be premature and unnecessarily narrow. Instead, success is  defined by a conjunction of findings. First, the model trains stably across multiple seeds and converge reproducibly. Second, perplexity improves systematically with basis size, dataset size, and state complexity. Third, the full model  outperforms its own ablations by nontrivial margins. Fourth, the learned basis atoms and coefficient patterns  exhibit interpretable structure. Fifth, the model displays at least one practically meaningful advantage over the transformer baseline, such as reduced inference memory, improved interpretability, and  favorable deployment behavior under constrained hardware.

The study  established that the BSLM is not merely a speculative theoretical construct, but an empirically grounded architectural direction worthy of further development. Even if transformer baselines remain superior on raw perplexity, the demonstration of coherent scaling, stable training, and advantageous structural tradeoffs would already constitute a significant result for the broader program of non-neural language modeling.

\section{Limitations and Open Problems}

Although the framework is transparent and mathematically structured, several limitations remain.

The optimization problem is jointly nonconvex. Alternating minimization is effective and analyzable, but global optimality is generally not guaranteed outside restricted regimes. Initialization can therefore influence the learned basis.

The identifiability theory depends on assumptions such as sparse generative structure, incoherence, and support diversity. These conditions are reasonable in many settings, but they may fail for highly correlated or dense latent structure.

The expressivity of a single learned basis expansion may be lower than that of a deep neural architecture on tasks requiring highly compositional nonlinear transformations. This is not a defect so much as a design tradeoff. DVBL sacrifices some compositional flexibility in exchange for interpretability, explicit basis control, and stronger mathematical structure.

Several open questions arise naturally. One may ask whether multilevel or hierarchical versions of DVBL can recover some of the compositional advantages of deep learning without becoming neural in the conventional sense. One may also ask how best to design basis regularizers for specific scientific domains, how to derive sharp statistical recovery rates under manifold and dynamical constraints, and how to unify adaptive basis learning with measurement design in compressed sensing and inverse problems.

\section{Conclusion}

This manuscript developed a non-neural framework for learning basis functions directly from data. The proposed Data-Driven Variational Basis Learning paradigm begins from the familiar representation principle of basis expansion but removes the assumption that the basis must be fixed in advance. Instead, the basis atoms are themselves learned as optimization variables, jointly with sample coefficients and, where relevant, latent dynamics.

The resulting framework occupies a meaningful middle ground between classical analysis and neural feature learning. It retains the explicitness, interpretability, and mathematical tractability of basis representations, while gaining the adaptivity traditionally associated with learned models. The theory presented here shows that the resulting optimization problem is well posed under mild assumptions, that alternating minimization yields monotone descent and critical-point convergence under standard regularity conditions, and that sparse generative structure permits coefficient uniqueness and basis identifiability.

More broadly, the work argues that learning from data need not be synonymous with neural networks. There exists a rich and rigorous alternative in which one directly learns the representation system itself. Such methods are especially attractive in settings where interpretability, geometry, dynamics, physics, and mathematical control matter as much as empirical flexibility. In this sense, adaptive basis learning offers not merely a replacement for fixed Fourier or wavelet systems, but a distinct research direction for non-neural representation learning.

\section*{Acknowledgments}

The author gratefully acknowledges helpful discussions and foundational work in sparse approximation, variational inference, operator learning, and manifold methods that inspired the perspective developed here.

\end{document}